\def\R{{\mathbb{R}}}
\def\E{{\mathbb{E}}}
\def\H{{\mathcal{H}}}
\def\O{{\mathcal{O}}}
\newcommand\norm[1]{\left\lVert#1\right\rVert}
\newcommand{\remove}[1]{}
\begin{document}
\title{Accelerated Randomized Coordinate Descent Algorithms for Stochastic Optimization and Online Learning}

\author{Akshita Bhandari\and Chandramani Singh}
\institute{Department of ESE, Indian Institute of Science Bangalore, India \\
\email{{{akshita,chandra}}@iisc.ac.in}}


\maketitle

\begin{abstract}
   We propose accelerated randomized coordinate descent algorithms for stochastic optimization and online learning. Our algorithms have significantly less per-iteration complexity than the known accelerated gradient algorithms. The proposed algorithms for online learning have better regret performance than the known 
   randomized online coordinate descent algorithms. Furthermore, the proposed algorithms
   for stochastic optimization exhibit as good convergence rates as the best known randomized coordinate descent algorithms. We also show simulation results to demonstrate performance of the proposed algorithms.
\end{abstract}

\section{Introduction}
Convex optimization problems are at the heart of many machine learning algorithms.
These problems are typically huge in scale due to either large number of samples
or large number of features or both. Gradient descent type algorithms are standard
approaches to solve these problems. However, these algorithms are computationally 
expensive, i.e., have huge per-iteration complexity, in large scale problems. 
We thus require alternative iterative algorithms where
\begin{enumerate}
\item Only one feature parameter (or, a block of parameters) is updated
at each iteration. These are coordinate descent algorithms~\cite{IEEEhowto:kopka1}; 
these check per-iteration computation complexity when the number of features
are humongous.
\item Data samples are processed one~(or, one block) at a time. This is preferred 
when data sets have enormous samples, and is necessitated in  
scenario where samples are availed in real time. Processing randomly chosen samples  
from the whole available data set leads to stochastic gradient type algorithms~\cite{IEEEhowto:kopka2}, whereas
processing samples in real time is referred to as 
online convex optimization or online learning~\cite{IEEEhowto:kopka3}.   
\end{enumerate}
None of these alternatives is adequate if both, the number of
samples as well as the number of features, are huge.

We propose iterative descent algorithms that first choose a sample and then
randomly choose a feature, and update the corresponding parameter only based 
on the chosen sample. In other words, we propose algorithms that
combine characteristics of coordinate descent and stochastic gradient descent~(or, online 
gradient descent) algorithms. It is well known that stochastic coordinate
descent algorithms suffer from slow convergence due to 
variance of stochastic gradients of random samples. On the other hand, 
randomized coordinate descent algorithms have same convergence rate
as gradient descent but worse constants. We have proposed ``accelerated''
gradient algorithms in order to alleviate these deficiencies. 

\subsection{Related Work}
Stochastic gradient descent~(SGD)~\cite{IEEEhowto:kopka2} was introduced by {Nemirovski et al.} where as
Online convex optimization and the associated projected gradient~(OGD)~\cite{IEEEhowto:kopka3} were
introduced by ~{Zinkevich}. Hu et al. derived accelerated versions of SGD and OGD; they refer 
to these algorithms as Stochastic Accelerated GradiEnt~(SAGE)~\cite{IEEEhowto:kopka4}.
 Recently, Roux et al. proposed stochastic average gradient~(SAG)~\cite{IEEEhowto:kopka5} and 
 Johnson and Zhang proposed Stochastic variation reduced gradient~(SVRG)~\cite{IEEEhowto:kopka6}, 
 both aimed at improving the convergence rate of SGD. Langford et al.~\cite{IEEEhowto:kopka7} and
 McMahan and Streeter studied delay tolerant OGD
 algorithms ~\cite{IEEEhowto:kopka8} where parameter updates are based on
 stale gradient information. 
 
Cyclic block coordinate descent algorithms were introduced by Luo and Tseng~\cite{IEEEhowto:kopka9,IEEEhowto:kopka10}.
Nesterov proposed randomized block coordinate descent~(RBCD)~\cite{IEEEhowto:kopka1} algorithms
for large scale optimization problems. Fercoq and Richtarik~\cite{IEEEhowto:kopka11}
and Singh et al.~\cite{IEEEhowto:kopka12} proposed accelerated randomized block coordinate 
algorithms. More recently, Allen-zhu et al.~\cite{IEEEhowto:kopka13} proposed
faster accelerated coordinate descent methods in which sampling frequencies 
depend on coordinate wise smoothness parameters~(i.e., Lipschitz parameters 
of the corresponding partial derivatives).

Dang and Lan proposed stochastic block mirror descent~(SBMD)~\cite{IEEEhowto:kopka14} which combines
SGD and RBCD. Similar algorithms were proposed by Wang and Banerjee~\cite{IEEEhowto:kopka15},
Hua et al.~\cite{IEEEhowto:kopka16}, Zhao et al.~\cite{IEEEhowto:kopka17} and Zhang and Gu~\cite{IEEEhowto:kopka18} also, who called their algorithms ORBCD, 
R-BCG, MRBCD and ASBCD, respectively. ORBCD and MRBCD were enhanced by variance 
reduction techniques to attain linear rate of convergence for strongly convex
loss functions. On the other hand, ASBCD used 
optimal sampling of training data samples to achieve the same.

More recently, there has been interest in machine learning settings in which training 
data and features are distributed across nodes of a computing cluster, 
or more generally, of a network. Nathan and Klabjan~\cite{IEEEhowto:kopka19} proposed
an algorithm where nodes parallelly update~(possibly overlapping) blocks
of feature parameters based on locally available data samples; this was seen 
as a combination of SVRG and block coordinate descent. Konecny et al.~\cite{IEEEhowto:kopka20}
studied algorithms for nodes connected through a network; this scenario 
was referred to as federated learning. 

\subsection{Our Contribution}
We have proposed two accelerated randomized coordinate descent algorithms
for stochastic optimization and online learning, which we refer to as
SARCD and OARCD, respectively. Expectedly, these algorithms have significantly
less per-iteration computation complexity than accelerated gradient descent
algorithms, e.g., SAGE. Moreover, the proposed algorithms have the following 
properties.
\begin{enumerate}
\item SARCD for general convex objective functions exhibits convergence rate $\O(\frac{n}{\sqrt{T}})$ which matches the best known rates.
\item SARCD for strongly convex objective functions exhibits convergence rate $\O(\frac{n}{T})$ which is strictly
better than $\O(\frac{n\log(T)}{T})$ rate of ORBCD.
\item OARCD for general convex loss functions yields regret bound $\O(\sqrt{nT})$ which is strictly better than 
$\O(n\sqrt{T})$, the regret bound of ORBCD and R-BCG.
\item OARCD for strongly convex loss functions yields regret bound $\O(n\log(T))$ which is strictly better than
$\O(n^2\log(T))$, the regret bound of ORBCD. 
\end{enumerate}
The proposed algorithms can easily be generalized to accelerated randomized 
block coordinate descent algorithms.

\section{The Learning Problems}
We consider machine learning models characterized by input-output pairs,
model parameters~(or, feature parameters) and a loss function. The model parameters are used
to estimate or predict outputs~(also called {\it labels}) from the corresponding inputs~(also called {\it features}).
The loss function is used to measure discrepancy between the predicted
and the actual outputs. To illustrate, let $\xi = (\xi_i,\xi_o)$ be
an input-output pair and $y \in \R^n$ be a vector of the model parameters.
Then $\xi_i$ and $y$ yield an estimate $\hat{\xi}_o$ of $\xi_o$.
Clearly, the loss function, which provides a measure of discrepancy between
$\hat{\xi}_o$ and $\xi_o$, can be seen as a mapping from the couple $(y,\xi)$
to real numbers; let us denote this function as $l(\cdot,\cdot)$.
For example, considering $l_2$-losses,
\[
l(y,\xi) = \Vert \hat{\xi}_o - \xi_o \Vert^2.
\]

Machine learning aims at identifying the model parameters
that minimize the losses for all input-output pairs. We make this notion
precise in the following two subsections which focus on two different premises.

\subsection{Stochastic Optimization}
\label{sec:Stochastic}

Let us assume that we have a collection of input-output pairs, also called training samples.
An input-output pair may appear more than once in the collection, and different pairs can
have different relative frequencies. We aim at determining the model parameters that
minimize the average loss over all the input-output pairs. Towards this, we let $\xi$
denote a random input-output pair with appropriate distribution
and consider the optimization problem\footnote{The proposed algorithm does not need the distribution of $\xi$.}
\[
\min_y \{f(y) \equiv \E_{\xi}[l(y,\xi)]\}.
\]\label{eqn:stoch}
Let $g(y,\xi) = \nabla_y l(y,\xi)$. We assume that
\begin{enumerate}
\item $f: \R^n \rightarrow \R$ is convex and differentiable. Moreover, we assume that
$\nabla f(y)$ is Lipschitz continuous with parameter $L$,
\item $g(y,\xi)$ is an unbiased estimator of
$\nabla f(y)$, i.e., $\E_{\xi}[g(y,\xi)] = \nabla f(y)$,
\item $f(\cdot)$ is strongly convex with parameter $\mu \geq 0$; $\mu > 0$
yields better iteration complexity.\footnote{We allow $\mu = 0$ 
to accommodate general convex loss functions. Strong convexity warrants $\mu > 0$.}\label{ass:strong}
\end{enumerate}

In Section~\ref{sec:SARCD}, we propose an algorithm, SARCD, to solve the above problem.
We establish convergence rates of SARCD for general convex loss functions~(or, cost functions)
and strongly convex loss functions in Theorems~\ref{thm:SARCD-general-conv} and~\ref{thm:SARCD-strongly-conv}, respectively.

\remove{
In high-dimensional optimization problems as above, we often add a regularizer, say
$\psi:\R^n \rightarrow \R$, to the objective function either for well-posedness
or for sparsity. Typical examples of regularizer are $l_1$ and $l_2$ regularizer.
We now encounter the following regularized optimization problem
\[
\min_y f(y) + \psi(y).
\]
We discuss in Section how we can use SARCD to solve this problem.
}
\subsection{Online Learning}
\label{sec:Online}

Here the input-output pairs arrive sequentially in steps and the model parameters
are updated after each step. More precisely, we start with an arbitrary
modelling parameter vector $y_1$. Assuming that we have
model parameters $y_t$ on arrival of the $t$th input-output pair $\xi_t$, we
incur a loss $f_t(y_t) \equiv l(y_t,\xi_t)$\label{eqn:online} and update $y_t$
to $y_{t+1}$ based on $f_t(\cdot)$. For a given $T > 0$, we aim
at generating a sequence of model parameters $y_1,y_2,\dots,y_T$ that minimize the
$T$-step ``regret'' $R(T)$, defined as
\[
R(T) = \sum_{t = 1}^T f_t(y_t) - \min_y \sum_{t = 1}^T f_t(y).
\]
Here we assume that $f_t: \R^n \rightarrow \R$ are convex and differentiable for all $t \geq 1$.
Moreover, we also assume that, for all $t \geq 1$, $\nabla f_t(\cdot)$ are Lipschitz continuous with parameter $L$ and $f_t(\cdot)$ are strongly convex with parameter $\mu \geq 0$.

In Section~\ref{sec:OARCD}, we propose an algorithm, OARCD, to solve the above learning problem.
We provide regret bounds of OARCD for general convex loss functions
and strongly convex loss functions in Theorems~\ref{thm:OARCD-general-conv} and~\ref{thm:OARCD-strongly-conv}, respectively.

\section{Stochastic Accelerated Randomized Coordinate Descent}
\label{sec:SARCD}

SARCD is the ``coordinate descent'' version of SAGE and an ``accelerated''
version of ORBCD. In other words, it is an iterative algorithm in which, at
each iteration, we randomly choose an input-output pair and then a coordinate,
and update only this coordinate of the vector of model parameters.
As is typical of accelerated gradient methods, we update two other sequences
$\{x_t\}$ and $\{z_t\}$ apart from $\{y_t\}$, and we also maintain
two parameter sequences $\{\alpha_t\}$ and $\{L_t\}$ (see ~\ref{sec:Stochastic},~\cite{IEEEhowto:kopka1,IEEEhowto:kopka4}).
Further, we use two constants $a(n)$ and $b(n)$ which we
later set to achieve best convergence results.
We let $\xi_t$ indicate the random input-output pair chosen at
$t$th iteration; $\{\xi_t\}$ are i.i.d.
We also use a random diagonal matrix $Q_t \in \{0,1\}^{n \times n}$
to indicate the coordinate chosen at $t$th iteration;
each $Q_t$ has only one nonzero entry and $\{Q_t\}$ are i.i.d.
\remove{
We also use a sequence $\{Q_t\}$ of random i.i.d diagonal matrices
to indicate the coordinates chosen at different iterations;
Q_t \in \{0,1\}^{n \times n} with only only nonzero entry which corresponds
to the coordinate chosen at $t$th iteration.}
Formally, our algorithm is as follows.

\begin{algorithm}
\caption{Stochastic Accelerated Randomized Coordinate Descent}
\label{algo:SARCD}
\begin{algorithmic}
\State{Input:} Sequences $\{L_t\}$ and $\{\alpha_t\}$
\State{Initialize:} $y_{-1} = z_{-1} = 0$
\For{ $t = 0$ to $T$}
    \State $x_t = (1-\alpha_t)y_{t-1} + \alpha_t z_{t-1}$
    \State $y_{t} = \arg\min_{x} \{\langle a(n)Q_t g(x_t,\xi_t), x - x_{t}\rangle + \frac{L_t}{2} \Vert x-x_t\Vert^2\}$
    \State $z_{t} = z_{t-1} - \frac{a(n)b^2(n)}{nL_t \alpha_t + \mu a(n)b(n)} [\frac{nL_{t}}{a(n)b(n)}(x_t-y_t) + \frac{\mu}{b(n)}(z_{t-1} - x_t)]$
\EndFor\\
Output: $y_T$
\end{algorithmic}
\end{algorithm}

Clearly, $\xi_{t}$ and $Q_{t}$ are independent and are also independent of $x_t$.
Let $\Delta_t = a(n)Q_t(g(x_t,\xi_t) - \nabla f(x_t))$.
Let $\delta_t = L_t(x_t-y_t) = a(n)Q_t g(x_t,\xi_t)$ be the gradient mapping involved in updating $y_t$. Also, let $\H_t$ denote
the history of the algorithm until time $t$. More explicitly,
\[
\H_t = (\xi_0,Q_0,\xi_1,Q_1,\dots ,\xi_{t-1},Q_{t-1}).
\]
Notice that $(x_l,y_l,z_l,l=0,\dots,t-1)$ and $x_t$ are functions of $\H_t$.
We first establish the following lower bound on $f(x)$.

\begin{lemma}
\label{lemma:SARCD}
For $t \geq 0$,
\begin{align*}
f(x) \geq & \ \E[f(y_t)|\H_t] + \frac{n}{a(n)}\E[\langle\delta_t,x-x_{t}\rangle|\H_t] + \frac{n}{a(n)}\E[\langle\Delta_t,y_t-x\rangle|\H_t] \\
 & + \frac{\frac{2}{a(n)}L_t-L}{2L_t^2}\E[\Vert \delta\Vert^2 |\H_t] + \frac{(n-1)}{a(n)L_t}\E[\langle\Delta_t,\delta_t\rangle|\H_t] + \frac{\mu}{2} \Vert x-x_t\Vert^2.
\end{align*}
\end{lemma}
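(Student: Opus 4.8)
The plan is to derive a pointwise inequality valid for each realization of $\xi_t$ and $Q_t$, and only then take the conditional expectation given $\H_t$. First I would combine two standard estimates anchored at $x_t$. Strong convexity of $f$ gives $f(x) \geq f(x_t) + \langle\nabla f(x_t), x - x_t\rangle + \frac{\mu}{2}\norm{x-x_t}^2$, while the $L$-smoothness descent lemma applied to the update $y_t = x_t - \frac{1}{L_t}\delta_t$ gives $f(x_t) \geq f(y_t) - \langle\nabla f(x_t), y_t - x_t\rangle - \frac{L}{2L_t^2}\norm{\delta_t}^2$. Adding these and regrouping the two inner products into $\langle\nabla f(x_t), x - y_t\rangle$ yields
\[
f(x) \geq f(y_t) + \langle\nabla f(x_t), x - y_t\rangle - \frac{L}{2L_t^2}\norm{\delta_t}^2 + \frac{\mu}{2}\norm{x-x_t}^2.
\]

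Next I would exploit the coordinate-descent structure to rewrite the inner product. Splitting $x - y_t = (x - x_t) + (x_t - y_t)$ and using $x_t - y_t = \frac{1}{L_t}\delta_t$ leaves the term $\frac{1}{L_t}\langle\nabla f(x_t), \delta_t\rangle$ to handle. Since $Q_t$ is a single-coordinate projection ($Q_t = Q_t^2 = Q_t^\top$) and $\delta_t = a(n)Q_t g(x_t,\xi_t)$, we have $\delta_t = Q_t\delta_t$, so $\langle\nabla f(x_t), \delta_t\rangle = \langle Q_t\nabla f(x_t), \delta_t\rangle$. The definitions of $\delta_t$ and $\Delta_t$ supply the key identity $a(n)Q_t\nabla f(x_t) = \delta_t - \Delta_t$, whence $\langle\nabla f(x_t), \delta_t\rangle = \frac{1}{a(n)}(\norm{\delta_t}^2 - \langle\Delta_t, \delta_t\rangle)$. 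Substituting this and merging the resulting $\frac{1}{a(n)L_t}\norm{\delta_t}^2$ with the $-\frac{L}{2L_t^2}\norm{\delta_t}^2$ from smoothness produces exactly the stated coefficient $\frac{\frac{2}{a(n)}L_t - L}{2L_t^2}$.

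Finally I would take $\E[\cdot\,|\,\H_t]$. Because $\{\xi_t\}$ and $\{Q_t\}$ are i.i.d.\ and independent of the $\H_t$-measurable quantity $x_t$, unbiasedness of $g$ together with $\E[Q_t\,|\,\H_t] = \frac{1}{n}I$ gives $\E[\delta_t\,|\,\H_t] = \frac{a(n)}{n}\nabla f(x_t)$ and $\E[\Delta_t\,|\,\H_t] = 0$. The first relation converts the deterministic term $\langle\nabla f(x_t), x - x_t\rangle$ into $\frac{n}{a(n)}\E[\langle\delta_t, x - x_t\rangle\,|\,\H_t]$. At this point the estimate carries a single combined term $-\frac{1}{a(n)L_t}\E[\langle\Delta_t, \delta_t\rangle\,|\,\H_t]$; to match the stated form I would split it using $y_t - x = (x_t - x) - \frac{1}{L_t}\delta_t$ and $\E[\Delta_t\,|\,\H_t] = 0$, which give $\E[\langle\Delta_t, y_t - x\rangle\,|\,\H_t] = -\frac{1}{L_t}\E[\langle\Delta_t, \delta_t\rangle\,|\,\H_t]$, so that $\frac{n}{a(n)}\E[\langle\Delta_t, y_t - x\rangle\,|\,\H_t] + \frac{n-1}{a(n)L_t}\E[\langle\Delta_t, \delta_t\rangle\,|\,\H_t]$ reproduces the combined term and establishes the claim.

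The main obstacle is the coordinate-descent rewriting in the second step: one must recognize that the single-coordinate projection property of $Q_t$ allows $\langle\nabla f(x_t), \delta_t\rangle$ to be replaced by $\langle Q_t\nabla f(x_t), \delta_t\rangle$, and then use $a(n)Q_t\nabla f(x_t) = \delta_t - \Delta_t$ to trade the gradient for $\norm{\delta_t}^2$ and $\langle\Delta_t, \delta_t\rangle$. Getting the factors of $n/a(n)$ correct from $\E[Q_t\,|\,\H_t] = \frac{1}{n}I$, and carefully tracking which quantities are $\H_t$-measurable (so that $f(y_t)$, $\delta_t$, and $\Delta_t$ retain conditional expectations while $x_t$ and $x$ do not), is the remaining bookkeeping.
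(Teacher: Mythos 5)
Your proof is correct: the pointwise combination of strong convexity with the descent lemma, the identity $a(n)Q_t\nabla f(x_t)=\delta_t-\Delta_t$ (valid because $Q_t=Q_t^2=Q_t^\top$), and the final splitting of the cross term all check out, and the coefficients match the statement exactly. It does, however, take a somewhat different route from the paper. The paper's proof never forms a pointwise inequality and never invokes unbiasedness of $g$ inside this lemma: it conditions on $\H_t$ from the start, rewrites the strong-convexity term as $n\E[\langle Q_t\nabla f(x_t),x-x_t\rangle\,|\,\H_t]$ using only $\E[Q_t\,|\,\H_t]=\frac{1}{n}I$, and then performs a purely algebraic add-and-subtract decomposition of $Q_t\nabla f(x_t)$ into $Q_t g(x_t,\xi_t)-\frac{1}{a(n)}\Delta_t$, splitting the coefficient of the resulting $\Delta_t$ term as $-1=-n+(n-1)$ so that the two stated terms $\frac{n}{a(n)}\E[\langle\Delta_t,y_t-x\rangle\,|\,\H_t]$ and $\frac{n-1}{a(n)L_t}\E[\langle\Delta_t,\delta_t\rangle\,|\,\H_t]$ appear directly; unbiasedness is used only later, after Proposition~\ref{prop:SARCD}, to kill $\E[\langle\Delta_t,x^\ast-z_{t-1}\rangle]$. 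Your argument instead invokes $\E[\Delta_t\,|\,\H_t]=0$ and $\E[\delta_t\,|\,\H_t]=\frac{a(n)}{n}\nabla f(x_t)$ (unbiasedness plus independence of $Q_t$ and $\xi_t$) twice inside the lemma: once to introduce $\delta_t$ into the strong-convexity term, and once to split the single combined cross term $-\frac{1}{a(n)L_t}\E[\langle\Delta_t,\delta_t\rangle\,|\,\H_t]$ into the two stated ones. What your route buys is transparency: it exposes that, for $x$ deterministic or $\H_t$-measurable (which covers every application of the lemma in the paper, namely $x=y_{t-1}$ and $x=x^\ast$), the two $\Delta_t$ terms in the statement collapse to one. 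What the paper's route buys is a slightly stronger result: as proved there, the inequality requires neither unbiasedness of $g$ nor any measurability condition on $x$, which is exactly what allows the authors to defer all use of unbiasedness to the corollary stage. Under the paper's standing assumptions, both proofs are valid.
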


\begin{proof}
Using strong convexity of $f$ (see assumption ~\ref{ass:strong},~\cite{IEEEhowto:kopka1})
\begin{align}
f(x) & \geq f(x_t) + \langle \nabla f(x_t),x-x_t\rangle + \frac{\mu}{2}\Vert x-x_t\Vert^2 \nonumber \\
   & = f(x_t) + n\E[\langle Q_t\nabla f(x_t), x-x_t \rangle |\H_t] + \frac{\mu}{2}\Vert x-x_{t}\Vert^2. \label{eqn:strong-convexity-stoch}
\end{align}
On the other hand, by Descent Lemma~\cite{IEEEhowto:kopka1},
\begin{equation}
f(y_t) \leq f(x_t) + \langle Q_t \nabla f(x_t), y_t-x_t \rangle + \frac{L}{2} \Vert x_t-y_t \Vert^2. \label{eqn:descent-lemma-stoch}
\end{equation}
Taking expectation in~\eqref{eqn:descent-lemma-stoch}~(conditioned on $\H_t$) and then combining with~\eqref{eqn:strong-convexity-stoch},
\begin{align*}
f(x) \geq & \E[f(y_t) | \H_t] - \E[\langle Q_t \nabla f(x_t), y_t-x_t\rangle |\H_t] - \frac{L}{2}\E[\Vert y_{t}-x_{t}\Vert^2 | \H_t] \\
 & + n \E[\langle Q_t \nabla f(x_{t}), x-x_{t}\rangle | \H_t] + \frac{\mu}{2}\Vert x-x_{t}\Vert^{2}\\
 = & \E[f(y_t) | \H_t] - n\E[\langle Q_t \nabla f(x_t) - Q_t g(x_t,\xi_t), y_t-x_t\rangle | \H_t]  \\
 & + (n-1) \E[\langle Q_t \nabla f(x_t) - Q_t g(x_t,\xi_t),y_t-x_t\rangle | \H_t] \\
  &  - \E[\langle Q_t g(x_t,\xi_t),y_t-x_t\rangle| \H_t]+ n\E[\langle Q_t \nabla f(x_t) - Q_t g(x_t,\xi_t),x-x_t\rangle | \H_t]\\
  & + n\E[\langle Q_t g(x_t,\xi_t),x-x_t\rangle | \H_t] - \frac{L}{2}\E[\Vert y_t-x_t\Vert^2| \H_t] + \frac{\mu}{2}\Vert x-x_t\Vert^2 \\
 = & \E[f(y_t) | \H_t] - \frac{n}{a(n)}\E[\langle\Delta_t,x-y_t\rangle | \H_t] + \frac{n}{a(n)}\E[\langle\delta_t,x-x_t\rangle | \H_t] \\
 & + \frac{\frac{2}{a(n)}L_t-L}{2L_t^2} \E[\Vert\delta \Vert^2 | \H_t] + \frac{(n-1)}{a(n)L_t}E[\langle\Delta_t,\delta_t\rangle|\H_t] + \frac{\mu}{2}\Vert x-x_{t}\Vert^{2},
\end{align*}
which is the desired inequality.
\qed
\end{proof}

\begin{proposition}
\label{prop:SARCD}
Assume that $\Vert g(x_t,\xi_t) - \nabla f(x_t)\Vert \leq \sigma$ and $L_t > a(n)L$ for all $t \geq 0$. Then, for all $t \geq 0$,
\begin{align*}
\MoveEqLeft \E[f(y_t)-f(x)|\H_t] \\
\leq & (1-\alpha_t)(f(y_{t-1})-f(x)) + \frac{\sigma^2}{2n(\frac{L_t}{a(n)} -L)} +
\frac{n\alpha_t}{a(n)}\E[\langle\Delta_{t},x-z_{t-1}\rangle|\H_t] \\
& + \frac{n L_t \alpha_t^2}{2a(n)}\E[\Vert x-z_{t-1}\Vert^2-\Vert x-z_t\Vert^2|\H_t] - \frac{\mu \alpha_t}{2}\E[\Vert x-z_t\Vert^2|\H_t]
\end{align*}
\end{proposition}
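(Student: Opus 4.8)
The plan is to apply Lemma~\ref{lemma:SARCD} at two comparison points, add the two instances with weights $1-\alpha_t$ and $\alpha_t$, and then dispose of the surviving linear term through the proximal structure of the $z$-update. The sampling identities I will use throughout are that $x_t,y_{t-1},z_{t-1}$ are $\H_t$-measurable whereas $y_t,\delta_t,\Delta_t$ are not; that $\E[\Delta_t\mid\H_t]=0$ and $\E[\delta_t\mid\H_t]=\frac{a(n)}{n}\nabla f(x_t)$ because $Q_t,\xi_t$ are independent of $\H_t$ with $\E[Q_t]=\frac1n I$ and $g$ unbiased; and the orthogonality $\E[\langle a(n)Q_t\nabla f(x_t),\Delta_t\rangle\mid\H_t]=0$.

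The first key step is to rearrange Lemma~\ref{lemma:SARCD} into an upper bound on $\E[f(y_t)\mid\H_t]$ and write it once with comparison point $y_{t-1}$ and once with the generic $x$, then form the $(1-\alpha_t),\alpha_t$ convex combination. Since both instances bound the same quantity, the left side is unchanged while the right side produces $(1-\alpha_t)f(y_{t-1})+\alpha_t f(x)$. The crucial simplification is that $(1-\alpha_t)y_{t-1}+\alpha_t x-x_t=\alpha_t(x-z_{t-1})$, which follows directly from $x_t=(1-\alpha_t)y_{t-1}+\alpha_t z_{t-1}$; hence the two $\langle\delta_t,\cdot\rangle$ terms collapse to $-\frac{n\alpha_t}{a(n)}\E[\langle\delta_t,x-z_{t-1}\rangle\mid\H_t]$, and the two $\langle\Delta_t,\cdot\rangle$ terms, after inserting $y_t-x_t=-\delta_t/L_t$, collapse to the retained noise term $\frac{n\alpha_t}{a(n)}\E[\langle\Delta_t,x-z_{t-1}\rangle\mid\H_t]$ plus a $\frac{n}{a(n)L_t}\langle\Delta_t,\delta_t\rangle$ contribution that merges with the $(n-1)$-weighted term of the lemma into $\frac{1}{a(n)L_t}\langle\Delta_t,\delta_t\rangle$. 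I am then left with $(1-\alpha_t)(f(y_{t-1})-f(x))$, the two retained noise terms, the quadratics in $\delta_t$ and $\Delta_t$, and the remainders $-\frac{\mu(1-\alpha_t)}{2}\Vert y_{t-1}-x_t\Vert^2-\frac{\mu\alpha_t}{2}\Vert x-x_t\Vert^2$.

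Next I would recognize the $z$-recursion as the exact minimizer of a strongly convex proximal objective of the form $\frac{n}{a(n)}\langle\delta_t,z\rangle+\frac{\eta}{2}\Vert z-z_{t-1}\Vert^2+\frac{\mu}{2}\Vert z-x_t\Vert^2$, whose penalty weight $\eta$ I read off by matching its first-order optimality condition against the stated update. Its three-point inequality, applied at $x$ and scaled by $\alpha_t$, converts $\frac{n\alpha_t}{a(n)}\langle\delta_t,z_{t-1}-x\rangle$ into the telescoping distance terms $\frac{nL_t\alpha_t^2}{2a(n)}(\Vert x-z_{t-1}\Vert^2-\Vert x-z_t\Vert^2)$, the term $-\frac{\mu\alpha_t}{2}\Vert x-z_t\Vert^2$, a gradient-mapping term proportional to $\Vert\delta_t\Vert^2$, and the remainders $+\frac{\mu\alpha_t}{2}\Vert x-x_t\Vert^2-\frac{\mu\alpha_t}{2}\Vert z_t-x_t\Vert^2$. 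The first $\mu$-remainder cancels the $-\frac{\mu\alpha_t}{2}\Vert x-x_t\Vert^2$ carried over from the convex combination, while the remaining $\mu$-remainders are nonpositive and are discarded, leaving exactly the $-\frac{\mu\alpha_t}{2}\Vert x-z_t\Vert^2$ of the claim.

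The hard part will be the remaining second-order bookkeeping: the $\Vert\delta_t\Vert^2$ coefficient from the lemma, the gradient-mapping $\Vert\delta_t\Vert^2$ from the proximal step, and the merged $\langle\Delta_t,\delta_t\rangle$ coefficient must combine so that only the stochastic variance survives. I would substitute $\delta_t=a(n)Q_t\nabla f(x_t)+\Delta_t$, use the orthogonality identity to annihilate the cross term in conditional expectation, invoke the coupling of $a(n),b(n),L_t,\alpha_t$ to ensure that the net coefficient of the signal part $\E[\Vert a(n)Q_t\nabla f(x_t)\Vert^2\mid\H_t]$ is nonpositive and discard it, and bound $\E[\Vert\Delta_t\Vert^2\mid\H_t]\le a(n)^2\sigma^2/n$ from $\Vert g-\nabla f\Vert\le\sigma$ and $\E[Q_t]=\frac1n I$. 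The hypothesis $L_t>a(n)L$ is precisely what keeps the controlling coefficient $\frac{2}{a(n)}L_t-L$ positive, so that completing the square leaves the clean residual $\frac{\sigma^2}{2n(L_t/a(n)-L)}$. Collecting all pieces and dropping the nonpositive remainders yields the stated inequality.
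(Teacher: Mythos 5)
Your proposal is correct in substance and, for most of its length, retraces the paper's own argument: the paper likewise instantiates Lemma~\ref{lemma:SARCD} at $x$ and at $y_{t-1}$, forms the $(\alpha_t,1-\alpha_t)$ combination, uses the update identity $\alpha_t(z_{t-1}-x_t)+(1-\alpha_t)(y_{t-1}-x_t)=0$ to collapse the linear terms, and reads the $z$-update as the minimizer of $\langle b(n)\delta_t,x-x_t\rangle+\frac{L_t\alpha_t}{2}\Vert x-z_{t-1}\Vert^2+\frac{\mu a(n)b(n)}{2n}\Vert x-x_t\Vert^2$, whose three-point (strong-convexity) inequality, together with Young's inequality on the $\langle\delta_t,z_{t-1}-z_t\rangle$ piece, produces the telescoping terms, the $-\frac{\mu\alpha_t}{2}\Vert x-z_t\Vert^2$ term, and an extra $\frac{1}{2a(n)L_t}\Vert\delta_t\Vert^2$; whether you combine first and then invoke the proximal inequality (your order) or the reverse (the paper's order) is immaterial. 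The genuine divergence is in the second-order bookkeeping. The paper bounds the surviving $\frac{1}{a(n)L_t}\E[\langle\Delta_t,\delta_t\rangle|\H_t]$ by Cauchy--Schwarz and then completes the square via $-a\theta^2+b\theta\le\frac{b^2}{4a}$ with $\theta=\Vert\delta_t\Vert/L_t$, which lands exactly on $\frac{\E[\Vert\Delta_t\Vert^2|\H_t]}{2a^2(n)(L_t/a(n)-L)}$ and hence the stated residual. You instead decompose $\delta_t=a(n)Q_t\nabla f(x_t)+\Delta_t$ and annihilate the cross terms by conditional unbiasedness; this is a valid and arguably cleaner alternative (no square completion is left to do on your route --- that phrase belongs to the paper's argument, not yours), and it exploits statistical structure the paper's route never uses. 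But be aware it does not ``leave the clean residual $\frac{\sigma^2}{2n(L_t/a(n)-L)}$'' directly: your path yields the coefficient $\frac{L_t/a(n)+L}{2L_t^2}$ on $\E[\Vert\Delta_t\Vert^2|\H_t]$, i.e.\ the residual $\frac{a(n)(L_t+a(n)L)\sigma^2}{2nL_t^2}$, which is in fact \emph{tighter}; to recover the stated form you still need the one-line comparison $(L_t+a(n)L)(L_t-a(n)L)=L_t^2-a^2(n)L^2\le L_t^2$.

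One further caveat on coefficients. Reading $\eta$ off the algorithm's update, as you propose, gives $\eta=\frac{nL_t\alpha_t}{a(n)b(n)}$, and your own signal-discarding step forces $b(n)=1/n$: the net coefficient of $\E[\Vert a(n)Q_t\nabla f(x_t)\Vert^2|\H_t]$ is $\frac{(nb(n)-2)L_t/a(n)+L}{2L_t^2}$, which is nonpositive under $L_t\ge a(n)L$ only when $nb(n)\le 1$. With $b(n)=1/n$ the telescoping coefficient is $\frac{\alpha_t\eta}{2}=\frac{n^2L_t\alpha_t^2}{2a(n)}$, not the $\frac{nL_t\alpha_t^2}{2a(n)}$ you wrote. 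You have simply inherited a typo in the proposition statement itself: the paper's own proof, and Corollary~\ref{cor:SARCD} derived from it, carry $n^2$. So your method, executed with the correctly matched $\eta$, proves the $n^2$ version --- which is the one the rest of the paper actually uses.
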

\begin{proof}
Let us first notice that~(see Algorithm~\ref{algo:SARCD})
\begin{equation*}
z_t = \arg\min_x \left(\langle b(n) \delta_{t},x-x_{t} \rangle +\frac{L_{t}\alpha_{t}}{2}\Vert x-z_{t-1}\Vert^{2} + \frac{\mu a(n)b(n)}{2n} \Vert x-x_{t}\Vert^{2}\right),
\end{equation*}
and also that the objective function in this minimization problem is strongly convex with parameter
$(L_t\alpha_t+\frac{\mu a(n)b(n)}{n})$. Hence,
\begin{align*}
\MoveEqLeft \langle b(n)\delta_{t},x-x_{t}\rangle +\frac{L_{t}\alpha_{t}}{2}\Vert x-z_{t-1}\Vert^{2} + \frac{\mu a(n)b(n)}{2n} \Vert x-x_{t}\Vert^{2} \\
\geq & \langle b(n)\delta_{t},z_t-x_{t}\rangle +\frac{L_{t}\alpha_{t}}{2}\Vert z_t-z_{t-1}\Vert^{2} + \frac{\mu a(n)b(n)}{2n} \Vert z_t-x_{t}\Vert^{2} \\
& + \frac{L_{t}\alpha_{t}}{2}\Vert x-z_{t}\Vert^{2} + \frac{\mu a(n)b(n)}{2n} \Vert x-z_{t}\Vert^{2}.
\end{align*}
Using this in Lemma~\ref{lemma:SARCD},
\begin{align}
f(x) \geq & \E[f(y_{t}) | \H_t]  + \frac{n}{a(n)}\E[\langle \delta_{t},z_{t}-x_{t}\rangle  | \H_t] - \frac{n}{a(n)}\E[\Delta_{t},x-y_{t}|\H_t] \nonumber \\
 & + \frac{nL_{t}\alpha_{t}}{2a(n)b(n)}\E[\Vert z_{t}-z_{t-1}\Vert^{2} + \Vert x-z_{t}\Vert^{2} - \Vert x-z_{t-1}\Vert^{2} | \H_t] \nonumber \\
 & + \frac{\frac{2}{a(n)}L_{t}-L}{2L_{t}^{2}}\E[\Vert\delta_{t}\Vert^{2} | \H_t] + \frac{n-1}{a(n)L_{t}}\E[\langle \Delta_{t},\delta_{t}\rangle | \H_t] + \frac{\mu}{2}\E[\Vert x-z_{t}\Vert^{2}|\H_t] \label{eqn:fx-bound}
\end{align}
where we have dropped the term $\frac{\mu}{2} \Vert z_t-x_t\Vert^{2}$ from the right hand side without affecting the inequality. Also, substituting $x = y_{t-1}$ in Lemma~\ref{lemma:SARCD},
\begin{align}
f(y_{t-1}) \geq & \E[f(y_t)|\H_t] - \frac{n}{a(n)}\E[\langle\Delta_t,y_{t-1}-y_t\rangle|\H_t] + + \frac{n}{a(n)}\E[\langle\delta_t,y_{t-1}-x_t\rangle|\H_t]\nonumber \\
 & + \frac{\frac{2}{a(n)}L_t-L}{2L_t^2}\E[\Vert \delta\Vert^2 |\H_t] + \frac{(n-1)}{a(n)L_t}\E[\langle\Delta_t,\delta_t\rangle|\H_t] \label{eqn:fyt-1-bound},
\end{align}
where again we have dropped the term $\frac{\mu}{2} \Vert y_{t-1}-x_{t}\Vert^{2}$ from the right hand side. Now, multiplying~\eqref{eqn:fx-bound} by $\alpha_t$ and~\eqref{eqn:fyt-1-bound} by $(1-\alpha_t)$ and adding,
\begin{align*}
\MoveEqLeft \alpha_tf(x) + (1-\alpha_t)f(y_{t-1}) \\
\geq & \E[f(y_{t})|\H_t] + \frac{n\alpha_{t}}{a(n)}\E[\langle \delta_{t},z_{t}-x_{t}\rangle | \H_t] - \frac{n\alpha_t}{a(n)}\E[\Delta_{t},x-y_{t}|\H_t] \\
& + \frac{nL_t\alpha^2_t}{2a(n)b(n)}\E[\Vert z_{t}-z_{t-1}\Vert^{2} + \Vert x-z_{t}\Vert^{2} - \Vert x-z_{t-1}\Vert^{2} | \H_t] \\
& - \frac{n(1-\alpha_t)}{a(n)}\E[\langle \Delta_{t},y_{t-1}-y_{t}\rangle |\H_t] + \frac{n(1-\alpha_t)}{a(n)}\E[\langle \delta_t,y_{t-1}-x_{t}\rangle  | \H_t] \\
&  + \frac{\frac{2}{a(n)}L_{t}-L}{2L_{t}^{2}}\E[\Vert\delta\Vert^2 | \H_t] + \frac{(n-1)}{a(n)L_t}\E[\langle \Delta_{t},\delta_{t}\rangle | \H_t] + \frac{\alpha_t \mu}{2}\E[\Vert x-z_t\Vert^2|\H_t].
\end{align*}
Rearranging the terms,
\begin{align*}
\MoveEqLeft \E[f(y_{t})-f(x)|\H_t] \\
\leq & (1-\alpha_{t})(f(y_{t-1})-f(x)) - \frac{\frac{2}{a(n)}L_{t}-L}{2L_{t}^{2}}\E[\Vert\delta\Vert^2 | \H_t]
- \frac{(n-1)}{a(n)L_t}\E[\langle \Delta_{t},\delta_{t}\rangle|\H_t] \\
 & - \frac{nL_t \alpha^2_t}{2a(n)b(n)}\E[\Vert z_{t}-z_{t-1}\Vert^{2}|\H_t] + A + B \\
 & + \frac{n L_t \alpha_t^2}{2a(n)b(n)}\E[\Vert x-z_{t-1}\Vert^{2}-\Vert x-z_{t}\Vert^{2}|\H_t] - \frac{\mu \alpha_{t}}{2} \E[\Vert x-z_{t}\Vert^2|\H_t],
\end{align*}
where
\begin{align*}
A &= -\frac{n}{a(n)}\E[\langle \delta_{t},\alpha_{t}(z_{t}-x_{t})+(1-\alpha_{t})(y_{t-1}-x_{t})\rangle |\H_t]\\
\text{and } B &= \frac{n}{a(n)}\E[\langle \Delta_{t}, \alpha_t(x-y_t)+ (1-\alpha_t) (y_{t-1}-y_{t})\rangle |\H_t].
\end{align*}
We can see that
\begin{align*}
A  &= -\frac{n}{a(n)}\E[\langle \delta_{t},\alpha_t(z_t-z_{t-1})\rangle + \langle\delta_t,\alpha_t(z_{t-1}-x_t)+(1-\alpha_{t})(y_{t-1}-x_t)\rangle|\H_t] \\
& = \frac{n}{a(n)}\E[\langle \delta_{t},\alpha_t(z_{t-1}-z_t)\rangle|\H_t] \\
& \leq \frac{n}{a(n)}\E\left[\frac{\Vert\delta_{t}\Vert^2}{2nL_t} + \frac{n L_t\alpha_t^2}{2}\Vert z_t-z_{t-1}\Vert^2 \Big| \H_t\right],
\end{align*}
where we use the update rule of $x_t$ to get the second equality~(see Algorithm~\ref{algo:SARCD}) and then the Young's inequality.\footnote{The Young's inequality states that $\langle x,y\rangle \leq \frac{\Vert x \Vert^2}{2a} + \frac{a \Vert y \Vert^2}{2}$ for any $a > 0$.}
Further,
\begin{align*}
B &=  \frac{n}{a(n)}\E[\langle \Delta_{t}, \alpha_t x+(1-\alpha_t)y_{t-1} - x_t \rangle + \langle \Delta_t, x_t - y_t \rangle |\H_t] \\
  &= \frac{n}{a(n)}\E\left[\alpha_t \langle \Delta_t,x-z_{t-1} \rangle + \frac{\langle \Delta_t,\delta_t\rangle}{L_t} \Big | \H_t \right],
\end{align*}
where we again use the update rule of $x_t$ in the last equality.
Using the above bound on $A$, the expression for $B$ and Cauchy-Schwartz inequality~(to infer $\langle \Delta,\delta \rangle \leq \Vert \Delta_t \Vert \Vert \delta_t \Vert$), and setting $b(n) = \frac{1}{n}$,
\begin{align*}
\E[f(y_t)-f(x)|\H_t] \leq & (1-\alpha_t)(f(y_{t-1})-f(x)) - \frac{\frac{1}{a(n)}L_t - L}{2L_t^2} \E[ \Vert\delta\Vert^2 |\H_t] \\
& + \frac{1}{a(n)L_t}\E[\Vert \Delta_t \Vert \Vert \delta_t \Vert|\H_t] +
\frac{n\alpha_t}{a(n)}\E[\langle\Delta_{t},x-z_{t-1}\rangle|\H_t] \\
& \hspace{-0.33in} + \frac{n^2 L_t \alpha_t^2}{2a(n)}\E[\Vert x-z_{t-1}\Vert^{2}-\Vert x-z_{t}\Vert^{2}|\H_t] - \frac{\mu \alpha_{t}}{2} \E[\Vert x-z_{t}\Vert^2|\H_t].
\end{align*}
We now set $a=\frac{\frac{L_t}{a(n)} - L}{2}, b=\frac{\Vert \Delta_t \Vert}{a(n)}, \theta=\frac{\Vert\delta_t\Vert}{L_t}$,
and use the fact that $-a\theta^{2} + b\theta \leq \frac{b^{2}}{4a} , a, b \geq 0$, to get
\begin{align*}
\E[f(y_t)-f(x)|\H_t] \leq & (1-\alpha_t)(f(y_{t-1})-f(x))\\
& + \frac{1}{2a^2(n)(\frac{L_t}{a(n)} -L)}\E[\Vert \Delta_t \Vert^2 |\H_t] +
\frac{n\alpha_t}{a(n)}\E[\langle\Delta_{t},x-z_{t-1}\rangle|\H_t] \\
& \hspace{-0.33in} + \frac{n^2 L_t \alpha_t^2}{2a(n)}\E[\Vert x-z_{t-1}\Vert^{2}-\Vert x-z_{t}\Vert^{2}|\H_t] - \frac{\mu \alpha_{t}}{2} \E[\Vert x-z_{t}\Vert^2|\H_t].
\end{align*}
Finally, we get the desired result by using the bound $\E[\Vert \Delta_t \Vert^2 |\H_t] \leq \frac{a^2(n)\sigma^2}{n}$.
\qed
\end{proof}

Let $x^{\ast}$ be the optimal solution of the optimization problem~\eqref{eqn:stoch}. Then
\begin{align*}
\E_{\H_t,\xi_t,Q_t}[\langle \Delta_t,x^{\ast}-z_{t-1}\rangle|\H_t] & = E_{\H_t}[\E_{\xi_t,Q_t}\langle \Delta_t,x^{\ast}-z_{t-1}\rangle|\H_t] \\
      &= E_{\H_t}[\langle \E_{\xi_t,Q_t}[\Delta_t|\H_t],x^{\ast}-z_{t-1}\rangle] = 0
\end{align*}
because $\E_{\xi_t,Q_t}[\Delta_t|\H_t] = 0$ owing to unbiasedness of $g(y,\xi)$.
We set $x = x^\ast$ in Proposition~\ref{prop:SARCD} and take expectation on both the sides to
obtain the following corollary.
\begin{corollary}
\label{cor:SARCD}
Assume that $\Vert g(x_t,\xi_t) - \nabla f(x_t)\Vert \leq \sigma$ and $L_t > a(n)L$ for all $t \geq 0$. Then, for all $t \geq 0$,
\begin{align*}
\E[f(y_t)-f(x^\ast)] \leq & (1-\alpha_t)\E[f(y_{t-1})-f(x^\ast)] + \frac{\sigma^2}{2n(\frac{L_t}{a(n)} -L)} \\
& \hspace{-0.33in} + \frac{n^2 L_t \alpha_t^2}{2a(n)}\E[\Vert x^\ast-z_{t-1}\Vert^2-\Vert x^\ast-z_t\Vert^2] - \frac{\mu \alpha_t}{2}\E[\Vert x^\ast-z_t\Vert^2].
\end{align*}
\end{corollary}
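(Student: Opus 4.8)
The plan is to obtain the corollary from Proposition~\ref{prop:SARCD} purely by specialization and the tower property of conditional expectation, so the argument is short. First I would set $x = x^\ast$ in the conclusion of Proposition~\ref{prop:SARCD}, where $x^\ast$ is an optimizer of~\eqref{eqn:stoch}. This leaves a conditional bound on $\E[f(y_t) - f(x^\ast)\mid\H_t]$ in which every term except the inner-product term $\frac{n\alpha_t}{a(n)}\E[\langle\Delta_t, x^\ast - z_{t-1}\rangle\mid\H_t]$ will pass unchanged to the total expectation.

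The crux is to show that this cross term disappears. I would use that $z_{t-1}$ is a deterministic function of $\H_t$ (as noted before Lemma~\ref{lemma:SARCD}, the iterates $(x_l,y_l,z_l)_{l\le t-1}$ are $\H_t$-measurable), so that $x^\ast - z_{t-1}$ can be pulled outside the inner conditional expectation, reducing the term to $\langle \E[\Delta_t\mid\H_t],\, x^\ast - z_{t-1}\rangle$. It then remains to verify $\E[\Delta_t\mid\H_t] = 0$. Since $\Delta_t = a(n)Q_t(g(x_t,\xi_t) - \nabla f(x_t))$ with $(\xi_t,Q_t)$ independent of $\H_t$ and of $x_t$, and with $Q_t$ independent of $\xi_t$, I factor $\E[\Delta_t\mid\H_t] = a(n)\,\E[Q_t]\,\E[g(x_t,\xi_t) - \nabla f(x_t)\mid\H_t]$; unbiasedness of $g$ (the second assumption of Section~\ref{sec:Stochastic}) gives $\E[g(x_t,\xi_t)\mid\H_t] = \nabla f(x_t)$, whence the second factor, and thus the whole expression, is zero.

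With the cross term eliminated, I would apply $\E_{\H_t}[\cdot]$ to both sides of the specialized inequality and invoke $\E[\E[\cdot\mid\H_t]] = \E[\cdot]$ termwise: the deterministic constant $\frac{\sigma^2}{2n(L_t/a(n)-L)}$ is untouched, the recursion term becomes $(1-\alpha_t)\E[f(y_{t-1})-f(x^\ast)]$, and the telescoping distance terms together with the $-\tfrac{\mu\alpha_t}{2}$ term retain their form with conditional expectations replaced by total ones. This yields exactly the asserted bound.

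I do not anticipate a genuine obstacle here: the only point requiring care is the measurability bookkeeping, namely knowing precisely which quantities are $\H_t$-measurable and which carry the fresh iteration-$t$ randomness, so that the factorization $\E[Q_t(g-\nabla f)\mid\H_t] = \E[Q_t]\,\E[g-\nabla f\mid\H_t]$ is legitimate. Once that is settled, the corollary is an immediate consequence of Proposition~\ref{prop:SARCD}.
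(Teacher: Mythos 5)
Your proposal is correct and takes essentially the same route as the paper: the paper likewise sets $x = x^\ast$ in Proposition~\ref{prop:SARCD}, uses the tower property and the $\H_t$-measurability of $z_{t-1}$ to reduce the cross term to $\langle \E[\Delta_t \mid \H_t], x^\ast - z_{t-1}\rangle$, kills it via unbiasedness of $g$, and then takes total expectation. Your explicit factorization $\E[\Delta_t \mid \H_t] = a(n)\,\E[Q_t]\,\E[g(x_t,\xi_t)-\nabla f(x_t) \mid \H_t]$ only spells out the independence of $\xi_t$ and $Q_t$ that the paper states before the corollary and uses implicitly.
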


We now appropriately set $\alpha_t, L_t$ and $a(n)$ to obtain rapid convergence
of $\E[f(y_T)]$ to $f(x^\ast)$. We first consider the case when $\mu = 0$, i.e., $f$ is not
strongly convex.
\begin{theorem}
\label{thm:SARCD-general-conv}
Assume that that $\mu =0$, $\Vert g(x_t,\xi_t) - \nabla f(x_t)\Vert \leq \sigma$ and $\E[\Vert x^\ast-z_t \Vert^2] \leq D^{2}$. Set $a(n) = n, \alpha_t=\frac{2}{t+2}$ and $L_t= b(t+1)^{\beta} + a(n)L$ for $t \geq 0$, where
$\beta = \frac{3}{2}$ and $b > 0$ is a constant. Then
\[
\E[f(y_T)-f(x^\ast)] \leq \frac{2n^2D^2L}{T^2} + \left(2nD^2 b+\frac{4\sigma^2}{3b}\right)\frac{1}{\sqrt{T}}.
\]
\end{theorem}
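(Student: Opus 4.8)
The plan is to specialize Corollary~\ref{cor:SARCD} to the prescribed parameters and then unroll the resulting scalar recursion by a weighted telescoping argument. Writing $E_t = \E[f(y_t)-f(x^\ast)]$ and $d_t = \E[\Vert x^\ast-z_t\Vert^2]$, and substituting $a(n)=n$, $\alpha_t=\frac{2}{t+2}$ and $L_t=b(t+1)^{3/2}+nL$ (so that $\frac{L_t}{n}-L=\frac{b(t+1)^{3/2}}{n}$ and $1-\alpha_t=\frac{t}{t+2}$), Corollary~\ref{cor:SARCD} reduces to
\[
E_t \le \frac{t}{t+2}\,E_{t-1} + \frac{\sigma^2}{2b(t+1)^{3/2}} + \frac{2nL_t}{(t+2)^2}\,(d_{t-1}-d_t) \qquad (t\ge 0).
\]
The natural telescoping weights are $\Gamma_t=\frac{2}{(t+1)(t+2)}$, which satisfy $\Gamma_t=(1-\alpha_t)\Gamma_{t-1}$ for $t\ge 1$ and $\Gamma_0=1$; note that $\alpha_0=1$, so the $t=0$ instance bounds $E_0$ by the noise and distance terms alone, with no dependence on the initialization $y_{-1}$.

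Dividing the recursion by $\Gamma_t$, using $\Gamma_t=(1-\alpha_t)\Gamma_{t-1}$, summing from $t=1$ to $T$, and folding in the $t=0$ bound (valid since $\Gamma_0=1$) would give
\[
\frac{E_T}{\Gamma_T} \le \sum_{t=0}^{T}\frac{1}{\Gamma_t}\left[\frac{\sigma^2}{2b(t+1)^{3/2}} + \frac{2nL_t}{(t+2)^2}(d_{t-1}-d_t)\right], \qquad \frac{1}{\Gamma_t}=\frac{(t+1)(t+2)}{2}.
\]
I would split the right-hand side into a noise sum $S_1$ and a distance sum $S_2$. For $S_1$ the summand equals $\frac{\sigma^2}{4b}\,\frac{t+2}{\sqrt{t+1}}$; using $t+2\le 2(t+1)$ together with the integral comparison $\sum_{t=0}^{T}\sqrt{t+1}\le\frac{2}{3}(T+2)^{3/2}$ bounds $S_1$ by $\frac{\sigma^2}{3b}(T+2)^{3/2}$.

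For $S_2$, whose summand is $w_t(d_{t-1}-d_t)$ with $w_t=\frac{nL_t(t+1)}{t+2}$, I would apply summation by parts to obtain $S_2 = w_0 d_{-1} + \sum_{s=0}^{T-1}(w_{s+1}-w_s)d_s - w_T d_T$. Since $w_t$ is increasing in $t$ (its numerator grows like $(t+1)^{5/2}$ against a linear denominator, which can be checked directly) and $d_s\le D^2$ for every $s$ while $w_T d_T\ge 0$, the cross terms collapse to $S_2\le w_T D^2=\frac{nL_T(T+1)}{T+2}D^2$. Multiplying through by $\Gamma_T=\frac{2}{(T+1)(T+2)}$ and using $(T+1)^{3/2}\le(T+2)^{3/2}$, $\frac{1}{T+1}\le\frac{2}{T+2}$ and $(T+2)^{-1/2}\le T^{-1/2}$ then turns $\Gamma_T S_2$ into $\frac{2n^2D^2L}{T^2}+\frac{2nbD^2}{\sqrt T}$ and $\Gamma_T S_1$ into $\frac{4\sigma^2}{3b\sqrt T}$, which add to exactly the claimed bound.

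I expect the main obstacle to be the $S_2$ term: identifying the right weights $\Gamma_t$, carrying out the summation by parts cleanly, and verifying that $w_t$ is monotone so that the per-step distance differences can be replaced by the uniform bound $D^2$. The exponent $\beta=\frac{3}{2}$ is exactly what makes $S_1$ and $S_2$ both contribute at order $T^{-1/2}$; a secondary care point is tracking the crude constant-factor inequalities so that the coefficients $2nD^2b$ and $\frac{4\sigma^2}{3b}$ emerge precisely rather than merely up to constants.
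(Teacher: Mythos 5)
Your proposal is correct and follows essentially the same route as the paper's proof: unroll Corollary~\ref{cor:SARCD} by weighted telescoping (the paper divides by $\alpha_t^2$ and uses the inequality $(1-\alpha_t)/\alpha_t^2 \le 1/\alpha_{t-1}^2$, while you use the exact weights $\Gamma_t = \frac{2}{(t+1)(t+2)}$, exploiting $\alpha_0=1$ in the same way to drop the initialization), then treat the distance terms by summation by parts with monotone coefficients bounded via $D^2$, and the noise terms by an integral comparison, recovering identical constants. The only cosmetic difference is that your weighting makes the Abel-summation coefficients $w_t = nL_t(t+1)/(t+2)$ rather than simply $L_t$ as in the paper, so you need the (easily verified) monotonicity of $w_t$ in place of the obvious monotonicity of $L_t$.
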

\begin{proof}
Dividing both sides in Corollary~\ref{cor:SARCD} by $\alpha_t^2$,
\begin{align*}
\frac{1}{\alpha^2_t}\E[f(y_t)-f(x^\ast)] \leq & \frac{(1-\alpha_t)}{\alpha^2_t}\E[f(y_{t-1})-f(x^\ast)] \\
& + \frac{n^2 L_t}{2a(n)}\E[\Vert x^\ast-z_{t-1}\Vert^2-\Vert x^\ast-z_t\Vert^2] + \frac{\sigma^2}{2n\alpha_t^2(\frac{L_t}{a(n)} -L)}\\
 \leq & \frac{1}{\alpha^2_{t-1}}\E[f(y_{t-1})-f(x^\ast)] \\
& + \frac{n^2 L_t}{2a(n)}\E[\Vert x^\ast-z_{t-1}\Vert^2-\Vert x^\ast-z_t\Vert^2] + \frac{\sigma^2}{2n\alpha_t^2(\frac{L_t}{a(n)} -L)},
\end{align*}
where the last inequality holds because
\[
\frac{1-\alpha_t}{\alpha_t^2} = \frac{\frac{t}{t+2}}{\frac{4}{(t+2)^2}} = \frac{t(t+2)}{4} \leq \frac{(t+1)^2}{4} = \frac{1}{\alpha_{t-1}^2}.
\]
Similar inequalities hold for all $t = 1,\dots,T$, which, when put together, yield
\begin{align*}
\frac{1}{\alpha^2_T}\E[f(y_T)-f(x^\ast)] \leq & \frac{1}{\alpha_0^2}\E[f(y_0)-f(x^\ast)] + \frac{\sigma^2}{2n}\sum_{t=1}^{T}\frac{1}{\alpha_t^2(\frac{L_t}{a(n)} - L)} \\
& + \sum_{t=1}^{T}\frac{L_t n^2}{2a(n)}\E[\Vert x^\ast-z_{t-1}\Vert^2] - \sum_{t=1}^T \frac{L_t n^2}{2a(n)}\E[\Vert x^\ast-z_{t}\Vert^2] \\
\leq & \frac{\sigma^2}{2n}\sum_{t=0}^{T}\frac{1}{\alpha_t^2(\frac{L_t}{a(n)} - L)} \\
& + \sum_{t=0}^{T}\frac{L_t n^2}{2a(n)}\E[\Vert x^\ast-z_{t-1}\Vert^2] - \sum_{t=0}^T \frac{L_t n^2}{2a(n)}\E[\Vert x^\ast-z_{t}\Vert^2] \\
= & \frac{\sigma^2}{2n}\sum_{t=0}^{T}\frac{1}{\alpha_t^2(\frac{L_t}{a(n)} - L)} + \frac{n^2}{2a(n)}\sum_{t=0}^{T}\E[\Vert x^\ast-z_{t}\Vert^2](L_{t+1} - L_t) \\
& + \frac{L_0 n^2}{2a(n)}\Vert x^\ast-z_{-1}\Vert^2 - \frac{L_{T+1} n^2}{2a(n)}\E[\Vert x^\ast-z_T\Vert^2] \\
\leq & \frac{\sigma^2}{2n}\sum_{t=0}^T \frac{(t+2)^2}{4(\frac{L_{t}}{a(n)} - L)} + \frac{n^2D^2}{2a(n)}(L_{T+1} - L_0) + \frac{L_0 n^2 D^2}{2a(n)},
\end{align*}
where the second inequality is obtained by using Corollary~\ref{cor:SARCD} for $t=0$ and the last inequality by substituting the value of $\alpha_t$. We thus get
\begin{align*}
\MoveEqLeft \E[f(y_T)-f(x^\ast)] \\
 & \leq \frac{\sigma^2}{2n(T+2)^2}\sum_{t=0}^T\frac{(t+2)^2}{(\frac{L_t}{a(n)} - L)} + \frac{2n^2 D^2}{(T+2)^2 a(n)}(L_{T+1}-L_0) + \frac{2n^2 D^2 L_0}{(T+2)^2 a(n)} \\
 & \leq \frac{2\sigma^2}{n(T+2)^2}\sum_{t=0}^T\frac{(t+1)^2 a(n)}{b(t+1)^\beta} + \frac{2n^2 D^2}{(T+2)^2 a(n)}(b(T+2)^\beta + a(n)L) \\
 & \leq \frac{2\sigma^2 a(n)}{n(T+2)^2 b}\frac{(T+2)^{3-\beta}}{3-\beta} + \frac{2n^2 D^2 }{(T+2)^2 a(n)}(b(T+1)^{\beta} + a(n)L) \\
 & \leq \frac{2\sigma^2 a(n)(T+2)^{1-\beta}}{nb(3-\beta)} + \frac{2n^{2}D^{2}b(T+2)^{\beta-2}}{a(n)} + \frac{2n^{2}D^{2}L}{(T+2)^{2}},
\end{align*}
where the second inequality is obtained by using $(t+2)^2 \leq 4(t+1)^2$ and also substituting the value
of $L_t$. Finally, we get the desired bound by setting $a(n) = n$ and $\beta = \frac{3}{2}$.
\qed
\end{proof}

We now consider the case when $\mu > 0$, i.e., $f$ is strongly convex.
\begin{theorem}
\label{thm:SARCD-strongly-conv}
Assume that that $\mu > 0$,$\Vert g(x_t,\xi_t) - \nabla f(x_t)\Vert \leq \sigma$ and $\E[\Vert x^\ast-z_t \Vert^2] \leq D^{2}$ for some $D > 0$. Set
$\alpha_0 = 1, L_0 = a(n)L +\frac{a(n)\mu}{n^2}$ and $\alpha_t= \sqrt{\lambda_{t-1}+\frac{\lambda_{t-1}^{2}}{4}} - \frac{\lambda_{t-1}}{2}, L_t= a(n)L + \frac{a(n)\mu} {n^2\lambda_{t-1}}$ for $t \geq 1$ where $\lambda_{0} = 1$ and $\lambda_{t} = \prod_{k=1}^{t}(1-\alpha_k)$ for $t \geq 1$. Then
\[
\E[f(y_T)-f(x^\ast)] \leq \frac{2(n^2 L+\mu)D^2 }{(T+2)^2} + \frac{2n\sigma^2}{(T+2)\mu}\left(1+  \frac{2\ln(T+1)}{T+2}\right).
\]
\end{theorem}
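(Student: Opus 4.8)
The plan is to mirror the proof of Theorem~\ref{thm:SARCD-general-conv}, but to exploit strong convexity by telescoping against the geometric weights $\lambda_t$ instead of $1/\alpha_t^2$. Everything rests on two algebraic facts forced by the parameter choices, which I would verify first. From the definition of $\alpha_t$ one checks that $\alpha_t^2 = (1-\alpha_t)\lambda_{t-1} = \lambda_t$, so $\alpha_t = \sqrt{\lambda_t}$ and $1-\alpha_t = \lambda_t/\lambda_{t-1}$. From $L_t = a(n)L + \frac{a(n)\mu}{n^2\lambda_{t-1}}$ one gets $\frac{L_t}{a(n)} - L = \frac{\mu}{n^2\lambda_{t-1}}$ and $\frac{n^2L_t}{a(n)} = n^2L + \frac{\mu}{\lambda_{t-1}}$; note that $a(n)$ cancels everywhere, so it need not be fixed. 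Substituting these into Corollary~\ref{cor:SARCD} turns the noise constant into $\frac{\sigma^2 n\lambda_{t-1}}{2\mu}$ and the coefficient of $\E[\Vert x^\ast-z_{t-1}\Vert^2 - \Vert x^\ast-z_t\Vert^2]$ into $\frac{n^2L\lambda_t}{2} + \frac{\mu(1-\alpha_t)}{2}$.

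Next I would divide the resulting inequality by $\lambda_t$. Writing $\Phi_t := \E[f(y_t)-f(x^\ast)]$ and using $\frac{1-\alpha_t}{\lambda_t} = \frac{1}{\lambda_{t-1}}$, the gap terms collapse to $\frac{\Phi_{t-1}}{\lambda_{t-1}}$. The key identity $\frac{1}{\lambda_{t-1}} + \frac{\alpha_t}{\lambda_t} = \frac{1}{\lambda_t}$ then lets me merge the two $\E[\Vert x^\ast-z_t\Vert^2]$ contributions into $-\frac{\mu}{2\lambda_t}\E[\Vert x^\ast-z_t\Vert^2]$, so that with the potential
\[
\Gamma_t := \frac{\Phi_t}{\lambda_t} + \Big(\frac{n^2L}{2} + \frac{\mu}{2\lambda_t}\Big)\E[\Vert x^\ast-z_t\Vert^2]
\]
the recursion reduces to $\Gamma_t \le \Gamma_{t-1} + \frac{\sigma^2 n}{2\mu(1-\alpha_t)}$. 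Telescoping over $t=1,\dots,T$, and bounding the $t=0$ term directly from Corollary~\ref{cor:SARCD} with $\alpha_0=1$ — which gives $\Gamma_0 \le \frac{\sigma^2 n}{2\mu} + \frac{(n^2L+\mu)D^2}{2}$ after using $\E[\Vert x^\ast-z_{-1}\Vert^2]\le D^2$ — and dropping the nonnegative $z_T$ terms yields
\[
\frac{\Phi_T}{\lambda_T} \le \frac{(n^2L+\mu)D^2}{2} + \frac{\sigma^2 n}{2\mu}\Big(1 + \sum_{t=1}^T\frac{1}{1-\alpha_t}\Big).
\]

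To make this explicit I would establish $\lambda_t \le \frac{4}{(t+2)^2}$ by the standard argument on $u_t := 1/\sqrt{\lambda_t}$: since $\lambda_{t-1} = \lambda_t/(1-\alpha_t)$ and $1-\sqrt{1-x}\ge x/2$ on $[0,1]$, one finds $u_t - u_{t-1} \ge \frac12$, whence $u_t \ge \frac{t+2}{2}$. Multiplying the displayed bound by $\lambda_T$ then turns the first term into $\frac{2(n^2L+\mu)D^2}{(T+2)^2}$, matching the theorem. For the noise term I would use $\alpha_t = \sqrt{\lambda_t} \le \frac{2}{t+2}$, hence $\frac{1}{1-\alpha_t} \le 1 + \frac{2}{t}$, so that $\lambda_T\big(1 + \sum_{t=1}^T\frac{1}{1-\alpha_t}\big) \le \frac{4}{(T+2)^2}\big(T + 2\sum_{t=1}^T\frac1t\big)$, split off $\frac{4T}{(T+2)^2}\le\frac{4}{T+2}$, and control the harmonic sum to produce the $\frac{2\ln(T+1)}{T+2}$ correction, reaching $\frac{2n\sigma^2}{(T+2)\mu}\big(1 + \frac{2\ln(T+1)}{T+2}\big)$.

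I expect the harmonic-sum step to be the only real obstacle. The telescoping is exact and the identities are routine, but the clean bound produces $\lambda_T\sum_t\frac{1}{1-\alpha_t}$, and matching the advertised constant requires a careful estimate: the crude bounds $\frac{1}{1-\alpha_t}\le\frac{t+2}{t}$ together with $\sum_{t=1}^T\frac1t \le 1+\ln T$ overshoot the stated form by a lower-order $\O(1/(T+2)^2)$ amount, which must either be absorbed into the leading $\O(1/(T+2)^2)$ term or removed by using that $\alpha_t$ sits strictly below $\frac{2}{t+2}$. I would therefore verify the two parameter identities carefully at the outset, since they are exactly what makes the potential $\Gamma_t$ telescope, and then treat the final summation with the tighter harmonic estimate.
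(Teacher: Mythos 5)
Your proposal is correct and follows essentially the same route as the paper: your potential $\Gamma_t$ is exactly the paper's $\E[f(y_t)-f(x^\ast)]/\alpha_t^2 + \frac{n^2 L_{t+1}}{2a(n)}\E[\Vert x^\ast - z_t\Vert^2]$ (since $\lambda_t = \alpha_t^2$ and $\frac{n^2 L_{t+1}}{2a(n)} = \frac{n^2 L}{2} + \frac{\mu}{2\lambda_t}$), and your telescoping, handling of the $t=0$ term, and final estimate via $\frac{1}{1-\alpha_t}\le\frac{t+2}{t}$ all coincide with the paper's argument. The only differences are minor: you obtain $\lambda_t \le \frac{4}{(t+2)^2}$ by tracking increments of $1/\sqrt{\lambda_t}$ rather than the paper's induction-by-contradiction for $\alpha_t \le \frac{2}{t+2}$, and the lower-order harmonic-sum overshoot you honestly flag is also present, unacknowledged, in the paper's own final inequality.
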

\begin{proof}
We first make a few observations.
\begin{enumerate}[(a)]
\item Using definitions of $\alpha_t$ and $\lambda_t$,
\begin{equation*}
\alpha_t^2 = (1 - \alpha_t)\lambda_{t-1} = \lambda_t, \text{for all} \ t \geq 1.
\end{equation*}
\item Since $\alpha_0^2 = \lambda_0 = 1$, we can also conclude
\[
\frac{1 - \alpha_t}{\alpha^2_t} = \frac{1}{\alpha^2_{t-1}}, \text{for all} \ t \geq 1.
\]
\item For all $t \geq 1$,
\[
L_{t+1} - L_t = \frac{a(n)\mu}{n^2\lambda_t} - \frac{a(n)\mu (1 - \alpha_t)}{n^2\lambda_t} = \frac{a(n)\mu \alpha_t}{n^2\lambda_t} = \frac{a(n)\mu}{n^2\alpha_t}.
\]
\item Finally, $\alpha_t \leq \frac{2}{t+2}$ for all $t \geq 1$. We can show this via induction. Notice that $\alpha_1 = \frac{\sqrt{5}-1}{2} \leq \frac{2}{3}$. Now assume that, for some $t \geq 2$, $\alpha_{t-1} \leq \frac{2}{t+1}$ and
    $\alpha_{t} > \frac{2}{t+2}$. But the latter implies
    \[
    \frac{1}{\alpha^2_{t-1}} = \frac{1-\alpha_t}{\alpha^2_t} < \frac{1 - \frac{2}{t+2}}{\frac{4}{(t+2)^2}} = \frac{(t+2)^2 -2(t+2)}{4} \leq \left(\frac{t+1}{2}\right)^2,
    \]
    which contradicts the former. This completes the argument.
\end{enumerate}

Using~(a),~(b),~(c) above and Corollary~\ref{cor:SARCD} as in the proof of Theorem~\ref{thm:SARCD-general-conv},
\begin{align*}
\MoveEqLeft \frac{\E[f(y_T)-f(x^{\ast})]}{\alpha_T^2} + \frac{n^2 L_{T+1}}{2a(n)}\E[\norm{x^{\ast} - z_T}^2] \\
\leq & \frac{\E[f(y_0) - f(x^{\ast})]}{\alpha_{0}^{2}} + \frac{n^2 L_{1}}{2a(n)}E[\norm{x^{\ast}-z_{0}}^{2}] + \frac{\sigma^{2}}{2n}\sum_{t=1}^{T} \frac{1}{\alpha_{t}^{2}(\frac{L_{t}}{a(n)}-L)}.
\end{align*}
Using Corollary~\ref{cor:SARCD} once more for $t=0$ and $L_1 = L_0 = a(n)(L+\frac{\mu}{n^2})$,
\begin{align*}
\MoveEqLeft \frac{\E[f(y_T)-f(x^{\ast})]}{\alpha_T^2} + \frac{n^2 L_{T+1}}{2a(n)}\E[\norm{x^{\ast} - z_T}^2] \\
 \leq & \frac{n^2 L+ \mu}{2}\E[\norm{x^{\ast}-z_{-1}}^{2}] + \frac{\sigma^{2}}{2n}\sum_{t=0}^{T}
 \frac{1}{\alpha_{t}^{2}(\frac{L_{t}}{a(n)}-L)}.
\end{align*}
Therefore,
\begin{align*}
\E[f(y_T)-f(x^{\ast})] & \leq \frac{\alpha_T^2 (n^2 L+ \mu)D^2 }{2} + \frac{\alpha_T^2 \sigma^{2}}{2n}\sum_{t=0}^{T} \frac{1}{\alpha_{t}^{2}(\frac{L_{t}}{a(n)}-L)} \\
& \leq \frac{\alpha_T^2 (n^2 L+ \mu)D^2 }{2} + \frac{n\alpha_T^2 \sigma^{2}}{2\mu}\left(1+ \sum_{t=1}^{T}\frac{\lambda_{t-1}}{\alpha_t^2}\right) \\
& \leq \frac{\alpha_T^2 (n^2 L+ \mu)D^2 }{2} + \frac{n\alpha_T^2 \sigma^{2}}{2\mu}\left(1  +\sum_{t=1}^{T}\frac{1}{1-\alpha_t}\right) \\
& \leq \frac{\alpha_T^2 (n^2 L+ \mu)D^2 }{2} + \frac{n\alpha_T^2 \sigma^{2}}{2\mu}\left(1  +\sum_{t=1}^{T}\frac{t+2}{t}\right) \\
& \leq \frac{2(n^2 L+\mu)D^2 }{(T+2)^2} + \frac{2n\sigma^2}{(T+2)\mu} + \frac{4n\sigma^2\ln(T+1)}{(T+2)^2 \mu},
\end{align*}
where the second inequality follows from our setting of $L_t$, the third from~(b)
and the fourth from~(d).
\qed
\remove{
\begin{equation*}
\leq \frac{1-\alpha_{1}}{\alpha_{1}^{2}} E[f(y_{1})-f(x^{\ast})] + \frac{\sigma^{2}}{2}\sum_{t=1}^{T} \frac{1}{\alpha_{t}^{2}(L_{t}-L)}
\end{equation*}

\begin{equation*}
E[f(y_{t})-f(x^{\ast})] \leq \frac{\alpha_{T}^{2}(L+\mu)D^{2}}{2} + \frac{\alpha_{T}^{2}\sigma^{2}}{2 \mu}\sum_{t=1}^{T}\frac{\lambda_{t-1}}{\alpha_{t}^{2}} - \frac{\mu \alpha_{t}}{2}\norm{x^{\ast} - z_{t}}^{2}
\end{equation*}
}
\end{proof}
\begin{remark}
\begin{enumerate}
\item Settings of $L_t$ and $\alpha_t$ in Theorems~\ref{thm:SARCD-general-conv} and~\ref{thm:SARCD-strongly-conv} do not require knowledge of $\sigma$ and the number of iterations $T$.
\item The convergence rate bound in the case of strongly convex objective functions is independent of the choice of $a(n)$.
    \end{enumerate}
\end{remark}

\section{Online Accelerated Randomized Coordinate Descent}
\label{sec:OARCD}

OARCD can also be seen as the ``coordinate descent'' version of the SAGE-based
online learning algorithm and an ``accelerated''
version of ORBCD for online learning. Here, at
each step $t$, we encounter an input-output pair and incur a loss $f_t(y_t)$. We then randomly
choose a coordinate of $y_t$,
and update only this coordinate based on $f_t(\cdot)$.
As in the case of SARCD, we maintain two other sequences,
$\{x_t\}$ and $\{z_t\}$, and two parameter sequences, $\{\alpha_t\}$ and $\{L_t\}$,
and also use two constants $a(n)$ and $b(n)$ to achieve
optimal regret bounds.
We again use a random diagonal matrix $Q_t \in \{0,1\}^{n \times n}$
to indicate the coordinate chosen at $t$th step.
\remove{
We also use a sequence $\{Q_t\}$ of random i.i.d diagonal matrices
to indicate the coordinates chosen at different iterations;
Q_t \in \{0,1\}^{n \times n} with only only nonzero entry which corresponds
to the coordinate chosen at $t$th iteration.}
Formally, this algorithm is as follows.

\begin{algorithm}
\caption{Online Accelerated Randomized Coordinate Descent}
\label{algo:OARCD}
\begin{algorithmic}
\State{Input:} Sequences $\{L_t\}$ and $\{\alpha_t\}$
\State{Initialize:} $y_{0} = z_{0} = 0$
\For{ $t = 1,2,\dots$,}
    \State $x_t = (1-\alpha_t)y_{t-1} + \alpha_t z_{t-1}$
    \State $y_{t} = \arg\min_{x} \{\langle a(n)Q_t \nabla f_t(y_t), x - x_{t}\rangle + \frac{L_t}{2} \Vert x-x_t\Vert^2\}$
    \State $z_{t} = z_{t-1} - \frac{a(n)b^2(n)\alpha_t}{n L_t + a(n)b(n)\alpha_t \mu} [\frac{nL_{t}}{a(n)b(n)}(x_t-y_t) + \frac{\mu}{b(n)}(z_{t-1} - x_t)]$
\EndFor
\end{algorithmic}
\end{algorithm}

Let $\delta_t = L_t(x_t-y_t) = a(n)Q_t \nabla f_{t-1}(x_t)$ and $\H_t$ denote
the history of the algorithm until time $t$.
We first establish the following lower bound on $f_{t-1}(x)$.

\begin{lemma}
\label{lemma:OARCD}
For $t \geq 1$,
\begin{align*}
f_{t-1}(x) \geq & \ \E[f_{t-1}(y_t)|\H_t] + \frac{n}{a(n)}\E[\langle\delta_t,x-x_{t}\rangle|\H_t] \\
 & \ + \frac{\frac{2}{a(n)}L_t-L}{2L_t^2}\E[\Vert \delta\Vert^2 |\H_t] + \frac{\mu}{2} \Vert x-x_t\Vert^2.
\end{align*}
\end{lemma}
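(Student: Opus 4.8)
The plan is to mirror the proof of Lemma~\ref{lemma:SARCD}, but the argument is considerably simpler here because the online update uses the exact gradient $\nabla f_{t-1}(x_t)$ rather than a stochastic estimate. Consequently there is no noise term $\Delta_t$ to track, and the two $\Delta_t$-dependent terms together with the $(n-1)$-term appearing in Lemma~\ref{lemma:SARCD} simply vanish. The two ingredients I would combine are strong convexity of $f_{t-1}$ and the Descent Lemma for $f_{t-1}$, both exploiting that $Q_t$ is independent of $\H_t$ with $\E[Q_t \mid \H_t] = \frac{1}{n}I$ (uniform coordinate selection).

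First I would write strong convexity of $f_{t-1}$ at $x_t$,
\[
f_{t-1}(x) \geq f_{t-1}(x_t) + \langle \nabla f_{t-1}(x_t), x-x_t\rangle + \frac{\mu}{2}\Vert x-x_t\Vert^2,
\]
and then replace the exact inner product using the coordinate-averaging identity $\langle \nabla f_{t-1}(x_t), x-x_t\rangle = n\,\E[\langle Q_t \nabla f_{t-1}(x_t), x-x_t\rangle \mid \H_t]$, which holds because $\E[Q_t \mid \H_t] = \frac{1}{n}I$. Next I would apply the Descent Lemma, valid since $\nabla f_{t-1}$ is $L$-Lipschitz,
\[
f_{t-1}(y_t) \leq f_{t-1}(x_t) + \langle Q_t \nabla f_{t-1}(x_t), y_t-x_t\rangle + \frac{L}{2}\Vert y_t-x_t\Vert^2,
\]
where the gradient may be pre-multiplied by $Q_t$ because $y_t - x_t = -\frac{a(n)}{L_t}Q_t\nabla f_{t-1}(x_t)$ is supported on the single coordinate selected by $Q_t$, so the inner product is unchanged. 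Taking the conditional expectation of this inequality (noting $f_{t-1}(x_t)$ is $\H_t$-measurable) and substituting the resulting lower bound on $f_{t-1}(x_t)$ into the strong-convexity inequality eliminates $f_{t-1}(x_t)$.

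To reach the stated form I would finally substitute the definitions $\delta_t = a(n)Q_t\nabla f_{t-1}(x_t)$, so that $Q_t\nabla f_{t-1}(x_t) = \delta_t/a(n)$, $y_t-x_t = -\delta_t/L_t$, and $\Vert y_t-x_t\Vert^2 = \Vert \delta_t\Vert^2/L_t^2$. The $x-x_t$ inner product then becomes $\frac{n}{a(n)}\E[\langle \delta_t, x-x_t\rangle \mid \H_t]$, and the crucial arithmetic step is merging the two $\Vert \delta_t\Vert^2$ contributions: the $-\langle Q_t\nabla f_{t-1}(x_t), y_t-x_t\rangle$ term supplies $+\frac{1}{a(n)L_t}\E[\Vert\delta_t\Vert^2\mid\H_t]$, which combines with $-\frac{L}{2L_t^2}\E[\Vert\delta_t\Vert^2\mid\H_t]$ to give exactly $\frac{\frac{2}{a(n)}L_t - L}{2L_t^2}\E[\Vert\delta_t\Vert^2\mid\H_t]$, the coefficient in the claim.

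The computation is essentially bookkeeping, so I do not anticipate a genuine obstacle; the points demanding care are the indexing convention (the lemma concerns the loss $f_{t-1}$ consistent with the definition of $\delta_t$, not the $f_t$ appearing in the update line of Algorithm~\ref{algo:OARCD}) and the consolidation of the $\Vert\delta_t\Vert^2$ coefficients. The absence of the stochastic gradient, and hence of $\Delta_t$, is exactly what collapses the more elaborate identity of Lemma~\ref{lemma:SARCD} to the present statement.
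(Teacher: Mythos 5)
Your proof is correct and takes essentially the same route as the paper's: strong convexity of $f_{t-1}$ combined with the coordinate-averaging identity $\langle \nabla f_{t-1}(x_t), x-x_t\rangle = n\,\E[\langle Q_t \nabla f_{t-1}(x_t), x-x_t\rangle \mid \H_t]$, the Descent Lemma applied to the single-coordinate update, cancellation of $f_{t-1}(x_t)$, and substitution of $\delta_t = a(n)Q_t\nabla f_{t-1}(x_t) = L_t(x_t - y_t)$ to merge $\frac{1}{a(n)L_t}\E[\Vert\delta_t\Vert^2\mid\H_t] - \frac{L}{2L_t^2}\E[\Vert\delta_t\Vert^2\mid\H_t]$ into the stated coefficient $\frac{\frac{2}{a(n)}L_t - L}{2L_t^2}$. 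The only (welcome) difference is that you make explicit both the justification for writing the Descent Lemma with the $Q_t$-premultiplied gradient and the $f_{t-1}$-versus-$f_t$ indexing convention, which the paper leaves implicit.
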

\begin{proof}
From strong convexity of $f_{t-1}$,
\begin{align}
f_{t-1}(x) & \geq  f_{t-1}(x_t) + \langle \nabla f_{t-1}(x_t), x-x_t\rangle + \frac{\mu}{2}\Vert x-x_{t}\Vert^2 \nonumber \\
& =  f_{t-1}(x_t) + n \E[\langle Q_{t}\nabla f_{t-1}(x_t), x-x_t\rangle \vert \H_t] + \frac{\mu}{2}\Vert x-x_{t}\Vert^2.
\label{eqn:strong-convexity-online}
\end{align}
On the other hand, by Descent Lemma,
\begin{equation}
f_{t-1}(y_t) \leq f_{t-1}(x_t) + \langle Q_t \nabla f_{t-1}(x_t),y_t-x_t \rangle + \frac{L}{2} \Vert y_t-x_t\Vert^2.
\label{eqn:descent-lemma-online}
\end{equation}
Taking expectation in~\eqref{eqn:descent-lemma-online}~(conditioned on $\H_t$) and then combining with~\eqref{eqn:strong-convexity-online},
\begin{align*}
f_{t-1}(x) \geq & \E[f_{t-1}(y_t) | \H_t] + n \E[\langle Q_t \nabla f_{t-1}(x_{t}), x-x_{t}\rangle | \H_t] \\
& - \E[\langle Q_t \nabla f_{t-1}(x_t), y_t-x_t\rangle \vert \H_t] - \frac{L}{2}\E[\Vert y_{t}-x_{t}\Vert^2 | \H_t] + \frac{\mu}{2}\Vert x-x_{t}\Vert^{2}\\
= & \E[f_{t-1}(y_t)|\H_t] + \frac{n}{a(n)}\E[\langle\delta_t,x-x_{t}\rangle|\H_t] \\
 & + \frac{1}{a(n)L_t}\E[\Vert \delta\Vert^2 |\H_t] - \frac{L}{2L_t^2}\E[\Vert \delta\Vert^2 |\H_t] + \frac{\mu}{2} \Vert x-x_t\Vert^2,
\end{align*}
which is the desired inequality.
\qed
\end{proof}

\begin{proposition}
\label{prop:OARCD}
Assume that $\Vert\nabla f_t(x)\Vert \leq R$ and $L_t > L$ for all $t \geq 1$. Then, for all $t \geq 1$,
\begin{align*}
 \MoveEqLeft \E[f_{t-1}(y_{t-1})] - f_{t-1}(x) \\
  \leq & \frac{a(n)R^{2}}{2n(1-\alpha_t)(L_t-L)} + \frac{nL_t}{2a(n)\alpha_t}\E[\Vert x-z_{t-1}\Vert^2 - \Vert x-z_t\Vert^2]  \\
  & - \frac{nL_t}{2a(n)}\E[\Vert z_t-y_t \Vert^2] +  \frac{n((1-\alpha_t^2)L_t-\alpha_t(1-\alpha_t)L)}{2a(n)}\E[\Vert y_{t-1}-z_{t-1}\Vert^2] \\
  & +\frac{a(n)(2(n-1)L_t+ (a(n)-n)L)R^2}{2nL_t^2} -\frac{\mu}{2}\E[\Vert x-z_t\Vert^2].
 \end{align*}
\end{proposition}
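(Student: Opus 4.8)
First I would convert the claim into an \emph{upper} bound on $f_{t-1}(y_{t-1})-f_{t-1}(x)$, noting that the device used for Proposition~\ref{prop:SARCD}---taking the convex combination of the basic lemma at $x$ and at $y_{t-1}$---is useless here, since the lemma only \emph{lower}-bounds $f_{t-1}(y_{t-1})$ and there is no recursion in $t$ to exploit. Instead I would add three ingredients: (a) Lemma~\ref{lemma:OARCD} at $x$, rearranged into an upper bound on $\E[f_{t-1}(y_t)|\H_t]-f_{t-1}(x)$; (b) the Descent Lemma for $f_{t-1}$ at the \emph{old} iterate $y_{t-1}$, which must use the full gradient and the full constant $L$ because $y_{t-1}-x_t=\alpha_t(y_{t-1}-z_{t-1})$ is not supported on the single coordinate $Q_t$, giving a term $\tfrac{L}{2}\norm{y_{t-1}-x_t}^2$; and (c) the coordinate-supported convexity bound $\E[f_{t-1}(y_t)|\H_t]\geq f_{t-1}(x_t)-\tfrac{1}{a(n)L_t}\E[\norm{\delta_t}^2|\H_t]$, which cancels $f_{t-1}(x_t)$. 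After the $\norm{\delta}^2$ coefficients combine to $\tfrac{L}{2L_t^2}$ and $\nabla f_{t-1}(x_t)=\tfrac{n}{a(n)}\E[\delta_t|\H_t]$ is substituted, these yield
\begin{equation*}
f_{t-1}(y_{t-1})-f_{t-1}(x)\leq \frac{n}{a(n)}\E[\langle\delta_t,y_{t-1}-x\rangle|\H_t]+\frac{L}{2}\norm{y_{t-1}-x_t}^2-\frac{\mu}{2}\norm{x-x_t}^2+\frac{L}{2L_t^2}\E[\norm{\delta_t}^2|\H_t].
\end{equation*}

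Next I would characterise $z_t$, exactly as in the proof of Proposition~\ref{prop:SARCD}, as the minimiser of the strongly convex quadratic $\langle b(n)\delta_t,x-x_t\rangle+\tfrac{L_t}{2\alpha_t}\norm{x-z_{t-1}}^2+\tfrac{a(n)b(n)\mu}{2n}\norm{x-x_t}^2$, whose modulus is $\tfrac{L_t}{\alpha_t}+\tfrac{a(n)b(n)\mu}{n}$; taking $b(n)=1$ is what reproduces the telescoping coefficient $\tfrac{nL_t}{2a(n)\alpha_t}$ in the statement. The resulting three-point inequality bounds $\tfrac{n}{a(n)}\langle\delta_t,z_t-x\rangle$ above by $\tfrac{nL_t}{2a(n)\alpha_t}\bigl(\norm{x-z_{t-1}}^2-\norm{z_t-z_{t-1}}^2-\norm{x-z_t}^2\bigr)+\tfrac{\mu}{2}\bigl(\norm{x-x_t}^2-\norm{x-z_t}^2\bigr)$, after dropping $-\tfrac{\mu}{2}\norm{z_t-x_t}^2$. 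Splitting $\langle\delta_t,y_{t-1}-x\rangle=\langle\delta_t,z_t-x\rangle+\langle\delta_t,y_{t-1}-z_t\rangle$ and substituting, the $+\tfrac{\mu}{2}\norm{x-x_t}^2$ produced here cancels the $-\tfrac{\mu}{2}\norm{x-x_t}^2$ above, leaving precisely the telescoping term $\tfrac{nL_t}{2a(n)\alpha_t}\E[\norm{x-z_{t-1}}^2-\norm{x-z_t}^2]$ and the term $-\tfrac{\mu}{2}\E[\norm{x-z_t}^2]$ of the claim.

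It then remains to dispose of $\tfrac{n}{a(n)}\langle\delta_t,y_{t-1}-z_t\rangle$, the negative curvature term $-\tfrac{nL_t}{2a(n)\alpha_t}\norm{z_t-z_{t-1}}^2$, and $\tfrac{L}{2}\norm{y_{t-1}-x_t}^2$. I would expand $y_{t-1}-z_t$ using the identities $x_t-y_t=\delta_t/L_t$, $y_{t-1}-x_t=\alpha_t(y_{t-1}-z_{t-1})$ and $x_t-z_{t-1}=(1-\alpha_t)(y_{t-1}-z_{t-1})$, bound the cross terms (such as $\langle\delta_t,y_{t-1}-z_{t-1}\rangle$) by Cauchy--Schwarz and Young's inequality---choosing the Young constant so that the factor $(1-\alpha_t)$ lands in the denominator of the first error term---and convert every remaining $x_t$-norm into a multiple of $\norm{y_{t-1}-z_{t-1}}^2$; combining the $L$- and $L_t$-pieces is what generates the coefficient $(1-\alpha_t^2)L_t-\alpha_t(1-\alpha_t)L$. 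Finally I would use $\E[\norm{\delta_t}^2|\H_t]=\tfrac{a^2(n)}{n}\norm{\nabla f_{t-1}(x_t)}^2\leq\tfrac{a^2(n)R^2}{n}$, together with the analogous per-coordinate bounds, to assemble the two $R^2$ error terms, and take total expectation over $\H_t$.

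The main obstacle is the bookkeeping in this last step: completing the square so that the leftover curvature emerges as $-\tfrac{nL_t}{2a(n)}\norm{z_t-y_t}^2$ rather than in terms of $z_{t-1}$, and pinning down the exact $(n-1)$ and $(a(n)-n)$ factors in $\tfrac{a(n)(2(n-1)L_t+(a(n)-n)L)R^2}{2nL_t^2}$, which hinge on separating the single active coordinate of $\delta_t$ from its $n-1$ inactive ones when passing between the coordinate gradient and the full gradient. Fixing the signs and the placement of $(1-\alpha_t)$ in the two error terms is the delicate part; everything else is routine algebra.
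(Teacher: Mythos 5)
Your first two steps are correct and coincide with the paper's: the intermediate inequality from (a)+(b)+(c) is valid (the $\Vert\delta_t\Vert^2$ coefficients do combine to $\tfrac{L}{2L_t^2}$, since $\langle\nabla f_{t-1}(x_t),\delta_t\rangle=\tfrac{1}{a(n)}\Vert\delta_t\Vert^2$), and the characterization of $z_t$, the choice $b(n)=1$, the three-point inequality and the cancellation of $\tfrac{\mu}{2}\Vert x-x_t\Vert^2$ are exactly the paper's moves. The divergence is in how you re-attach $f_{t-1}(y_{t-1})$: the paper never routes through $f_{t-1}(x_t)$. It first bounds $\E[f_{t-1}(y_t)\,|\,\H_t]-f_{t-1}(x)$ using only the coordinate machinery, and then controls the difference $f_{t-1}(y_{t-1})-f_{t-1}(y_t)\le\langle\nabla f_{t-1}(y_{t-1}),y_{t-1}-y_t\rangle$ by Young's inequality with the tunable constant $\tfrac{n(1-\alpha_t)(L_t-L)}{a(n)}$, followed by the split $\Vert y_{t-1}-y_t\Vert^2\le\alpha_t\Vert y_{t-1}-z_{t-1}\Vert^2+\tfrac{\Vert\delta_t\Vert^2}{(1-\alpha_t)L_t^2}$. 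This single step is what generates both the $(1-\alpha_t)(L_t-L)$ denominator in the first error term and the modest extra charges $\tfrac{n\alpha_t(1-\alpha_t)(L_t-L)}{2a(n)}$ on $\Vert y_{t-1}-z_{t-1}\Vert^2$ and $\tfrac{n(L_t-L)}{2a(n)L_t^2}$ on $\Vert\delta_t\Vert^2$.

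The gap is that your (b)+(c) decomposition cannot be assembled into the claimed coefficients, so the part you defer as ``routine algebra'' is precisely where the proof fails. Quantitatively: your route irreducibly accumulates $\tfrac{n}{2a(n)L_t}\Vert\delta_t\Vert^2$ (from the exact identity for $\langle\delta_t,x_t-z_t\rangle$) plus $\tfrac{L}{2L_t^2}\Vert\delta_t\Vert^2$ (from (c)), i.e., a total $\Vert\delta_t\Vert^2$ coefficient of $\tfrac{nL_t+a(n)L}{2a(n)L_t^2}$, which exceeds the claimed $\tfrac{2(n-1)L_t+(a(n)-n)L}{2a(n)L_t^2}$ by $\tfrac{nL-(n-2)L_t}{2a(n)L_t^2}$; this is strictly positive for $n\in\{1,2\}$ and whenever $L_t<\tfrac{n}{n-2}L$, both admissible under the sole hypothesis $L_t>L$. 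Separately, the cross term $\alpha_t\langle\nabla f_{t-1}(x_t),y_{t-1}-z_{t-1}\rangle$ must be Young-bounded with a constant $c$ satisfying $c\ge\tfrac{n\alpha_t(1-\alpha_t)(L_t-L)}{a(n)}$ to fit under the claimed $R^2$ term, while the claimed $\Vert y_{t-1}-z_{t-1}\Vert^2$ coefficient leaves only the budget $\tfrac{n\alpha_t(1-\alpha_t)(L_t-L)}{2a(n)}$ for $\tfrac{c\alpha_t}{2}+\tfrac{L\alpha_t^2}{2}$ (after the unavoidable $\tfrac{nL_t(1-\alpha_t)}{2a(n)}$ from bounding $\Vert z_t-x_t\Vert^2$); these two constraints are compatible only if $L\alpha_t\le\tfrac{n(1-\alpha_t)^2(L_t-L)}{a(n)}$, an assumption the proposition does not make. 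The culprit is ingredient (b): the Descent Lemma at $x_t$ imposes the rigid charge $\tfrac{L}{2}\Vert y_{t-1}-x_t\Vert^2=\tfrac{L\alpha_t^2}{2}\Vert y_{t-1}-z_{t-1}\Vert^2$ and still leaves a gradient cross term needing Young, i.e., you pay twice. Replace (b)+(c) by the paper's step --- convexity of $f_{t-1}$ at $y_{t-1}$ toward $y_t$ with $\Vert\nabla f_{t-1}(y_{t-1})\Vert\le R$ --- and the bookkeeping closes exactly.
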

\begin{proof}
From the update equation of $z_t$~(see Algorithm~\ref{algo:OARCD}),
\begin{equation*}
z_t = \arg\min_x \left(\langle b(n) \delta_{t},x-x_{t} \rangle +\frac{L_t}{2\alpha_t}\Vert x-z_{t-1}\Vert^{2} + \frac{\mu a(n)b(n)}{2n} \Vert x-x_{t}\Vert^{2}\right).
\end{equation*}
The objective function in this minimization problem is strongly convex with parameter
$(\frac{L_t}{\alpha_t}+\frac{\mu a(n)b(n)}{n})$. Hence,
\begin{align*}
\MoveEqLeft \langle b(n)\delta_{t},x-x_{t}\rangle +\frac{L_{t}}{2\alpha_{t}}\Vert x-z_{t-1}\Vert^{2} + \frac{\mu a(n)b(n)}{2n} \Vert x-x_{t}\Vert^{2} \\
\geq & \langle b(n)\delta_{t},z_t-x_{t}\rangle +\frac{L_{t}}{2\alpha_{t}}\Vert z_t-z_{t-1}\Vert^{2} + \frac{\mu a(n)b(n)}{2n} \Vert z_t-x_{t}\Vert^{2} \\
& + \frac{L_{t}}{2\alpha_{t}}\Vert x-z_{t}\Vert^{2} + \frac{\mu a(n)b(n)}{2n} \Vert x-z_{t}\Vert^{2}.
\end{align*}
Using this in Lemma~\ref{lemma:OARCD},
\begin{align}
\MoveEqLeft \E[f_{t-1}(y_t)| \H_t] - f_{t-1}(x) \leq \frac{n}{a(n)}\E[\langle \delta_{t},x_{t}-z_{t}\rangle  | \H_t] \nonumber \\
 & - \frac{nL_{t}}{2a(n)b(n)\alpha_{t}}\E[\Vert z_{t}-z_{t-1}\Vert^{2} + \Vert x-z_{t}\Vert^{2} - \Vert x-z_{t-1}\Vert^{2} | \H_t] \nonumber\\
& \vspace{1in} - \frac{\frac{2}{a(n)}L_{t}-L}{2L_{t}^{2}}\E[\Vert\delta_{t}\Vert^{2} | \H_t] - \frac{\mu}{2}\E[\Vert x-z_{t}\Vert^{2}|\H_t], \label{eqn:fx-bound2}
\end{align}
where we have dropped the term $-\frac{\mu}{2} \Vert z_t-x_{t}\Vert^{2}$ from the right hand side without affecting the inequality.

On the other hand,
\begin{align*}
\frac{L_{t}}{2}(\Vert z_{t}-x_{t}\Vert^2 - \Vert z_{t}-y_{t}\Vert^2) &=\frac{L_{t}}{2}(\Vert z_{t}-x_{t}\Vert^{2} - \Vert z_{t}-x_{t}+x_{t}-y_{t}\Vert^{2})\\
& = \frac{L_{t}}{2}\left(2 \langle x_{t}-z_{t}, x_{t}-y_{t} \rangle - \frac{\Vert \delta_{t}\Vert^{2}}{L_{t}^{2}}\right)\\
&= \langle \delta_t, x_{t}-z_{t} \rangle - \frac{\Vert \delta_{t}\Vert^{2}}{2L_{t}}.
\end{align*}
Hence, using the update equation of $x(t)$~(see Algorithm~\ref{algo:OARCD}),
\begin{align*}
\MoveEqLeft \langle \delta_{t}, x_{t}-z_{t}\rangle \\
&= \frac{L_t}{2}(\Vert z_t -(1-\alpha_t)y_{t-1}-\alpha_t z_{t-1}\Vert^2 - \Vert z_t-y_t\Vert^2) + \frac{\Vert\delta_{t}\Vert^2}{2L_t}\\
&=\frac{L_{t}}{2}(\Vert z_{t}-z_{t-1}+(1-\alpha_{t})(z_{t-1}-y_{t-1})\Vert^{2} - \Vert z_{t}-y_{t}\Vert^{2}) + \frac{\Vert \delta_{t}\Vert^{2}}{2L_{t}}\\
&\leq \frac{L_{t}}{2}\left(\frac{\alpha_{t} \Vert z_{t}-z_{t-1}\Vert^{2}}{\alpha_t^2}+(1-\alpha_{t})\Vert z_{t-1}-y_{t-1}\Vert^{2} - \Vert z_{t}-y_{t}\Vert^{2}\right) + \frac{\Vert \delta_{t}\Vert^{2}}{2L_{t}},
\end{align*}
where the inequality follows from convexity of $\Vert \cdot \Vert^{2}$.
Using this inequality in~\eqref{eqn:fx-bound2} with $b(n) = 1$,
\begin{align}
\E[f_{t-1}(y_t)| \H_t] - f_{t-1}(x) \leq & \frac{nL_{t}}{2a(n)}\E[(1-\alpha_{t})\Vert z_{t-1}-y_{t-1}\Vert^{2} - \Vert z_{t}-y_{t}\Vert^{2}|\H_t] \nonumber \\
& + \frac{nL_{t}}{2a(n)\alpha_{t}}\E[\Vert x-z_{t-1}\Vert^{2}- \Vert x-z_{t}\Vert^{2}|\H_t] \nonumber \\
&+\frac{\frac{(n-2)L_{t}}{a(n)} + L}{2L_{t}^{2}} \E[\Vert \delta_{t}\Vert^{2}|\H_t] - \frac{\mu}{2}\E[\Vert x-z_{t}\Vert^{2}|\H_t] \label{eqn:fx-bound3}.
\end{align}
Further, from convexity of $f_{t-1}(\cdot)$,
\begin{align*}
& f_{t-1}(y_{t-1}) - f_{t-1}(y_t) \\
& \leq \langle \nabla f_{t-1}(y_{t-1}), y_{t-1}-y_{t}\rangle\\
&\leq \frac{a(n)\Vert \nabla f_{t-1}(y_{t-1}) \Vert^2}{2n(1-\alpha_{t})(L_{t}-L)} + \frac{n(1-\alpha_{t})(L_{t}-L)\Vert y_{t-1}-y_{t}\Vert^{2}}{2a(n)}\\
& \leq \frac{a(n)R^2}{2n(1-\alpha_{t})(L_{t}-L)} + \frac{n(1-\alpha_{t})(L_{t}-L)\Vert y_{t-1}-x_t+x_t-y_{t}\Vert^{2}}{2a(n)}\\
&= \frac{a(n)R^2}{2n(1-\alpha_{t})(L_{t}-L)} + \frac{n(1-\alpha_{t})(L_{t}-L)\Vert \alpha_t(y_{t-1}-z_{t-1})+ x_t-y_{t}\Vert^{2}}{2a(n)}\\
&=\frac{a(n)R^2}{2n(1-\alpha_{t})(L_{t}-L)} + \frac{n\alpha_t(1-\alpha_{t})(L_{t}-L)\Vert y_{t-1}-z_{t-1}\Vert^2}{2a(n)} + \frac{n(L_{t}-L)\Vert \delta_t \Vert^{2}}{2a(n)L_t^2},
\end{align*}
where the second inequality follows from Young's inequality, the third inequality from
the bound on $\Vert \nabla f_{t-1}(y_{t-1}) \Vert$, the first equality from
the update rule of $x_{t}$ and the second equality from
convexity of $\Vert \cdot \Vert^{2}$. Taking conditional expectation in the above
inequality and adding with~\eqref{eqn:fx-bound3} we get
\begin{align*}
\MoveEqLeft \E[f_{t-1}(y_{t-1})|\H_t] - f_{t-1}(x) \\
  \leq & \frac{a(n)R^{2}}{2n(1-\alpha_t)(L_t-L)} + \frac{nL_t}{2a(n)\alpha_t}\E[\Vert x-z_{t-1}\Vert^2 - \Vert x-z_t\Vert^2|\H_t]  \\
  & - \frac{nL_t}{2a(n)}\E[\Vert z_t-y_t \Vert^2|\H_t] +  \frac{n((1-\alpha_t^2)L_t-\alpha_t(1-\alpha_t)L)}{2a(n)}\E[\Vert y_{t-1}-z_{t-1}\Vert^2|\H_t] \\
  & +\frac{\frac{2(n-1)L_t}{a(n)}+\frac{(a(n)-n)L}{a(n)}}{2L_t^2}\E[\Vert\delta\Vert^2|\H_t] -\frac{\mu}{2}\E[\Vert x-z_t\Vert^2|\H_t].
\end{align*}
Finally we get the desired result by using the bound
$\E[\Vert \delta_{t} \Vert^{2}|\H_t] \leq \frac{a^2(n)R^2}{n}$
and then taking expectation.
\qed
\end{proof}

Let us assume that $x^\ast$ minimizes $\sum_{t=1}^T f_t(y)$~(see~\eqref{eqn:online}).
We now set $\alpha_t, L_t$ and $a(n)$ to obtain best regret bounds.
As in case of SARCD, we first consider $\mu = 0$.
\begin{theorem}
\label{thm:OARCD-general-conv}
Assume that $\mu = 0$, $\Vert \nabla f_t(x)\Vert \leq R$ and $\Vert x^\ast - z_{t}\Vert \leq D$ for $t \geq 1$. Set $a(n) = \sqrt{n}$, $\alpha_{t} = \alpha$ and $L_{t} = \alpha\sqrt{t-1}L + L$, where $\alpha \in (0,1)$ is a constant, and
$G = ((1-\alpha^2)L_2-\alpha(1-\alpha)L)\Vert y_1-z_1\Vert^2$. Then the regret of OARCD can be bounded as
\[
\sum_{t=1}^{T}(\E[f_{t}(y_{t})] - f_{t}(x^\ast)) \leq \left(\frac{2R^2}{\alpha L} + \frac{L D^2}{2}\right)\sqrt{nT} + \frac{R^{2}}{(1-\alpha)\alpha L}\sqrt{\frac{T}{n}} + \left(\frac{(\alpha+1) L D^2}{2\alpha} + \frac{G}{2}\right)\sqrt{n}.
\]
\end{theorem}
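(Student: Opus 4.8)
The plan is to invoke Proposition~\ref{prop:OARCD} with $\mu = 0$ and $x = x^\ast$, and then sum the resulting per-step bound over the $T$ rounds (shifting the index so that the left-hand sides $\E[f_{t-1}(y_{t-1})] - f_{t-1}(x^\ast)$ assemble into the regret $\sum_{t=1}^T(\E[f_t(y_t)] - f_t(x^\ast))$). The right-hand side of the proposition splits into five groups: (i) the noise term $\frac{a(n)R^2}{2n(1-\alpha)(L_t - L)}$; (ii) the ``distance'' term $\frac{nL_t}{2a(n)\alpha}\E[\norm{x^\ast - z_{t-1}}^2 - \norm{x^\ast - z_t}^2]$; (iii) the negative term $-\frac{nL_t}{2a(n)}\E[\norm{z_t - y_t}^2]$; (iv) the term $\frac{n((1-\alpha^2)L_t - \alpha(1-\alpha)L)}{2a(n)}\E[\norm{y_{t-1} - z_{t-1}}^2]$; and (v) the second noise term $\frac{a(n)(2(n-1)L_t + (a(n)-n)L)R^2}{2nL_t^2}$. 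The whole proof is the separate estimation of each group after substituting $a(n) = \sqrt{n}$, $\alpha_t = \alpha$ and $L_t = \alpha\sqrt{t-1}L + L$.

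For group (ii) I would use summation by parts. Since $\alpha$ is constant and $L_t$ is increasing in $t$, the coefficient $c_t := \frac{nL_t}{2a(n)\alpha}$ is nondecreasing; writing $d_t := \E[\norm{x^\ast - z_t}^2] \le D^2$, the telescoped sum $\sum_t c_t(d_{t-1} - d_t)$ is bounded by $c_{T+1}D^2$ via Abel summation (the increments $c_{t+1} - c_t \ge 0$ are each weighted by $d_t \le D^2$, and the final $-c_{T+1}d_{T+1} \le 0$). With the stated parameters this yields $\frac{LD^2}{2}\sqrt{nT}$ together with an $\O(\sqrt{n})$ remainder.

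The crux is the interaction of groups (iii) and (iv), which I would handle by a telescoping cancellation across consecutive rounds: the $\norm{z_t - y_t}^2$ produced negatively at round $t$ is cancelled by the $\norm{y_t - z_t}^2$ produced positively at round $t+1$, provided
\[
(1-\alpha^2)L_{t+1} - \alpha(1-\alpha)L \le L_t.
\]
Substituting $L_t = \alpha\sqrt{t-1}L + L$, this reduces to $(1-\alpha^2)\sqrt{t} - \sqrt{t-1} \le 1$, which holds because $(1-\alpha^2)\sqrt{t} \le \sqrt{t}$ and $\sqrt{t} - \sqrt{t-1} \le 1$. Hence every interior $\norm{y_t - z_t}^2$ drops out and only the first round survives, contributing exactly the boundary constant $\frac{\sqrt{n}}{2}G$ with $G$ as defined. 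This is the step I expect to be the main obstacle, since it is what forces the particular $\sqrt{t-1}$ growth of $L_t$ and couples the choices of $\alpha$ and $L_t$.

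Finally, groups (i) and (v) are pure sums of noise terms. After inserting $a(n) = \sqrt{n}$ and $L_t - L = \alpha\sqrt{t-1}L$, both reduce to multiples of $\sum_{t=1}^{T} t^{-1/2} \le 2\sqrt{T}$; group (i) gives $\frac{R^2}{(1-\alpha)\alpha L}\sqrt{T/n}$ and group (v) gives $\frac{2R^2}{\alpha L}\sqrt{nT}$ (the coefficient $a(n) - n = \sqrt{n} - n \le 0$ lets me discard the corresponding subterm). Collecting the $\sqrt{nT}$, $\sqrt{T/n}$ and $\sqrt{n}$ contributions from all five groups then produces the claimed bound; the remaining work is only the bookkeeping of constants.
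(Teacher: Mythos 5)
Your proposal is correct and takes essentially the same route as the paper's proof: the paper groups your five terms into $A$ (your (i) and (v)), $B$ (your (ii)) and $C$ (your (iii) and (iv)), bounds $B$ by the same Abel-summation argument, and handles $C$ by exactly the cancellation condition you identify, $(1-\alpha^2)L_{t+1}-\alpha(1-\alpha)L \leq L_t$, verified via $(1-\alpha^2)\sqrt{t}-\sqrt{t-1}\leq 1$, leaving only the boundary term $\frac{nG}{2a(n)}=\frac{\sqrt{n}G}{2}$. Your treatment of the noise sums, including discarding the non-positive $(a(n)-n)L$ subterm after setting $a(n)=\sqrt{n}$, also matches the paper.
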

\begin{proof}
From Proposition~\ref{prop:OARCD}, using $\alpha_t = \alpha$,
\begin{equation*}
\sum_{t=1}^{T}(\E[f_{t}(y_{t})] - f_{t}(x^\ast)) \leq \E[A + B + C],
\end{equation*}
where
\begin{align*}
A &= \frac{a(n)R^{2}}{2n(1-\alpha)}\sum_{t=1}^T \frac{1}{L_{t+1}-L} + \frac{a(n)R^2}{2n}\sum_{t=1}^T \frac{2(n-1)L_{t+1}  + (a(n)-n)L}{L_{t+1}^2}, \\
B &= \frac{n}{2a(n)\alpha }\sum_{t=1}^T L_{t+1}(\Vert x^\ast-z_t\Vert^2 - \Vert x^\ast-z_{t+1}\Vert^2), \\
C &= \frac{n}{2 a(n)} \sum_{t=1}^T (((1-\alpha^2)L_{t+1}-\alpha(1-\alpha)L)\Vert y_t-z_t\Vert^2 - L_{t+1}\Vert z_{t+1}-y_{t+1} \Vert^2).
\end{align*}
Substituting $L_t = \alpha\sqrt{t-1}L + L$,
\begin{align*}
A &\leq \frac{a(n)R^{2}}{2n(1-\alpha)\alpha L}\sum_{t=1}^T \frac{1}{\sqrt{t}} + \frac{a(n)(n-1)R^2}{n\alpha L}\sum_{t=1}^T \frac{1}{\sqrt{t}} + \frac{a(n)(a(n)-n)R^2}{2n\alpha^2 L}\sum_{t=1}^T \frac{1}{t}\\
& \leq \frac{a(n)R^{2}\sqrt{T}}{n(1-\alpha)\alpha L} + \frac{2a(n)R^2\sqrt{T}}{\alpha L} +  \frac{a(n)(a(n)-n)R^2\ln(T)}{2n\alpha^2 L},\\
B & = \frac{n}{2a(n)\alpha}\sum_{t=2}^T \Vert x^\ast-z_t\Vert^2 (L_{t+1} -L_t)+ \frac{n L_2 \Vert x^\ast-z_1\Vert^2}{2a(n)\alpha} - \frac{n L_{T+1} \Vert x^\ast-z_{T+1}\Vert^2}{2a(n)\alpha} \\
& \leq \frac{nL D^2 \sqrt{T}}{2a(n)} + \frac{n(\alpha+1) L D^2 }{2a(n)\alpha}.
\end{align*}
Further,
\begin{align*}
C =& \frac{n((1-\alpha^2)L_2-\alpha(1-\alpha)L)\Vert y_1-z_1\Vert^2}{2 a(n)} - \frac{nL_{T+1}\Vert z_{T+1}-y_{T+1} \Vert^2}{2a(n)}\\
   & + \frac{n}{2 a(n)} \sum_{t=2}^T \Vert y_t-z_t\Vert^2 ((1-\alpha^2)L_{t+1}-\alpha(1-\alpha)L - L_t).
\end{align*}
However, for $t \geq 2$,
\begin{align*}
\MoveEqLeft (1-\alpha^{2})L_{t+1}-\alpha(1-\alpha)L - L_{t} \\
&= (1-\alpha^{2})\alpha L \sqrt{t} + (1-\alpha^2) L - \alpha(1-\alpha) L - \alpha L \sqrt{t-1} -L \\
           & \leq \alpha L (\sqrt{t} - \sqrt{t-1}) - \alpha L \leq 0.
\end{align*}
Hence,
\[
C \leq \frac{nG}{2a(n)}.
\]
So, setting $a(n) = \sqrt{n}$ and using the bounds on $A, B$ and $C$,
\begin{equation*}
\sum_{t=1}^{T}(\E[f_{t}(y_{t})] - f_{t}(x^\ast)) \leq \frac{2R^2\sqrt{nT}}{\alpha L} + \frac{L D^2 \sqrt{nT}}{2} + \frac{R^{2}\sqrt{T}}{(1-\alpha)\alpha L\sqrt{n}} + \frac{(\alpha+1) L D^2\sqrt{n} }{2\alpha} + \frac{G\sqrt{n}}{2},
\end{equation*}
which is the desired bound.
\qed
\end{proof}

Finally, we consider the case when $\mu > 0$.
\begin{theorem}
\label{thm:OARCD-strongly-conv}
Assume that $\mu > 0$, $\Vert \nabla f_t(x)\Vert \leq R$ and $\Vert x^\ast - z_{t}\Vert \leq D$ for $t \geq 1$. Set $a(n) = n$, $\alpha_{t} = \alpha$ and $L_{t} = \alpha\mu t  + L$, where $\alpha \in (0,1)$ is a constant, and $G = ((1-\alpha^2)L_2-\alpha(1-\alpha)L)\Vert y_1-z_1\Vert^2$. Then the regret of OARCD can be bounded as
\[
\sum_{t=1}^{T}(\E[f_{t}(y_{t})] - f_{t}(x^\ast)) \leq  \frac{R^{2}\ln(T+1)}{2(1-\alpha)\alpha \mu} + \frac{nR^2\ln(T+1)}{\alpha \mu} +  \frac{D^2 }{2\alpha}\left(2\alpha\mu + L \right) + \frac{G}{2}.
\]
\end{theorem}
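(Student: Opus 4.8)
The plan is to mirror the proof of Theorem~\ref{thm:OARCD-general-conv}, but now exploiting the strong-convexity term $-\frac{\mu}{2}\E[\Vert x-z_t\Vert^2]$ that was inert when $\mu=0$. First I would invoke Proposition~\ref{prop:OARCD} with the index shifted by one (so that it bounds $\E[f_t(y_t)]-f_t(x)$ rather than $\E[f_{t-1}(y_{t-1})]-f_{t-1}(x)$), set $x=x^\ast$, use $\alpha_t=\alpha$, and sum over $t=1,\dots,T$. Choosing $a(n)=n$ makes the factor $(a(n)-n)L$ vanish and collapses several ratios to $1$, which is what keeps the final constants clean. This produces a decomposition $\sum_{t=1}^T(\E[f_t(y_t)]-f_t(x^\ast))\leq A+B+C+D$, where $A$ collects the gradient/noise terms carrying $R^2$, $B$ is the telescoping sum in $\Vert x^\ast-z_t\Vert^2$, $C$ is the sum of differences in $\Vert y_t-z_t\Vert^2$, and $D=-\frac{\mu}{2}\sum_{t=1}^T\E[\Vert x^\ast-z_{t+1}\Vert^2]$ is the new strong-convexity contribution.

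For $A$, I would substitute $L_{t+1}-L=\alpha\mu(t+1)$ and $L_{t+1}\geq\alpha\mu(t+1)$, so that both constituent sums reduce to $\sum_{t=1}^T\frac{1}{t+1}\leq\ln(T+1)$; bounding $\frac{n-1}{n}\leq n$ then yields the two logarithmic terms $\frac{R^2\ln(T+1)}{2(1-\alpha)\alpha\mu}$ and $\frac{nR^2\ln(T+1)}{\alpha\mu}$. This is precisely where the linear-in-$t$ growth of $L_t$ (rather than the $\sqrt{t}$ growth used in the $\mu=0$ case) converts the $\sqrt{T}$ regret into $\ln T$.

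The heart of the argument is handling $B+D$ together. Summation by parts on $B$, using $L_{t+1}-L_t=\alpha\mu$, leaves a residual $\frac{\mu}{2}\sum_{t=2}^T\E[\Vert x^\ast-z_t\Vert^2]$ together with the boundary term $\frac{L_2D^2}{2\alpha}$ (the trailing negative term is simply discarded). The point is that this residual is exactly cancelled by $D$; after the cancellation only $-\frac{\mu}{2}\E[\Vert x^\ast-z_{T+1}\Vert^2]\leq 0$ survives, so $B+D\leq\frac{L_2D^2}{2\alpha}=\frac{D^2}{2\alpha}(2\alpha\mu+L)$. This cancellation, impossible when $\mu=0$, is the main obstacle and the crux of the whole theorem.

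Finally, for $C$ I would split off the $t=1$ contribution as $\frac{G}{2}$ and show that the remaining summand is nonpositive: a direct computation gives $(1-\alpha^2)L_{t+1}-\alpha(1-\alpha)L-L_t=\alpha\big(\mu(1-\alpha^2(t+1))-L\big)$, which is $\leq 0$ for every $t\geq 2$ because $1-\alpha^2(t+1)\leq 1$ and $\mu\leq L$ (strong convexity together with $L$-smoothness forces $\mu\leq L$). Discarding the nonpositive trailing term then gives $C\leq\frac{G}{2}$. Adding the bounds for $A$, $B+D$, and $C$ reproduces the claimed regret bound. I expect the only delicate points to be the bookkeeping in the summation-by-parts cancellation of $B+D$ and the sign check in $C$; everything else is routine substitution.
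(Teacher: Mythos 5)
Your proposal is correct and follows essentially the same route as the paper's proof: the same decomposition via Proposition~\ref{prop:OARCD} with shifted index, the choice $a(n)=n$ to kill the $(a(n)-n)L$ term and enable the exact cancellation of the summation-by-parts residual in $B$ against the strong-convexity sum, the logarithmic bound on $A$ from the linear growth of $L_t$, and the sign check making $C\leq G/2$. If anything, you are slightly more careful than the paper: you keep $\sum_{t=1}^T\frac{1}{t+1}\leq\ln(T+1)$ where the paper loosely writes $\sum_{t=1}^T\frac{1}{t}\leq\ln(T+1)$, and you state explicitly that the sign of the $C$-coefficient relies on $\mu\leq L$, which the paper uses only implicitly.
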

\begin{proof}
As in the proof of Theorem~\ref{thm:OARCD-general-conv}, we can write
\begin{equation*}
\sum_{t=1}^{T}(\E[f_{t}(y_{t})] - f_{t}(x^\ast)) \leq \E[A + B + C],
\end{equation*}
where $A, C$ are exactly same as before and
\begin{equation*}
B = \frac{n}{2a(n)\alpha }\sum_{t=1}^T L_{t+1}(\Vert x^\ast-z_t\Vert^2 - \Vert x^\ast-z_{t+1}\Vert^2) - \frac{\mu}{2}\sum_{t=1}^T \Vert x^\ast-z_{t+1}\Vert^{2}.
\end{equation*}
Substituting $L_t = L_{t} = \alpha\mu t + L$,
\begin{align*}
A \leq & \frac{a(n)R^{2}}{2n(1-\alpha)\alpha \mu}\sum_{t=1}^T \frac{1}{t} + \frac{a(n)(n-1)R^2}{n\alpha \mu}\sum_{t=1}^T \frac{1}{t} + \frac{a(n)(a(n)-n)R^2 L}{2n\alpha^2 \mu^2}\sum_{t=1}^T \frac{1}{t^2}\\
\leq & \frac{a(n)R^{2}\ln(T+1)}{2n(1-\alpha)\alpha \mu} + \frac{a(n)R^2\ln(T+1)}{\alpha \mu} + \frac{a(n)(a(n)-n)R^2 L}{2n\alpha^2 \mu^2}\sum_{t=1}^T \frac{1}{t^2},\\
B  = &\frac{n L_2 \Vert x^\ast-z_1\Vert^2}{2a(n)\alpha} + \frac{n}{2a(n)\alpha}\sum_{t=2}^T \Vert x^\ast-z_t\Vert^2 (L_{t+1} -L_t) - \frac{n L_{T+1} \Vert x^\ast-z_{T+1}\Vert^2}{2a(n)\alpha} \\
   & - \frac{\mu}{2}\sum_{t=1}^T \Vert x^\ast-z_{t+1}\Vert^{2}\\
\leq & \frac{n D^2 }{2a(n)\alpha}\left(2\alpha\mu + L \right)+ \frac{n}{2a(n)\alpha}\sum_{t=2}^T \Vert x^\ast-z_t\Vert^2 \left(\alpha \mu - \alpha \mu \frac{a(n)}{n}\right).
\end{align*}
Further,
\begin{align*}
C =& \frac{n((1-\alpha^2)L_2-\alpha(1-\alpha)L)\Vert y_1-z_1\Vert^2}{2 a(n)} - \frac{nL_{T+1}\Vert z_{T+1}-y_{T+1} \Vert^2}{2a(n)}\\
   & + \frac{n}{2 a(n)} \sum_{t=2}^T \Vert y_t-z_t\Vert^2 ((1-\alpha^2)L_{t+1}-\alpha(1-\alpha)L - L_t).
\end{align*}
However, for $t \geq 2$,
\begin{align*}
(1-\alpha^{2})L_{t+1}-\alpha(1-\alpha)L - L_{t} &\leq (L_{t+1}-L_{t}) - \alpha^{2}L_{t+1} - \alpha(1-\alpha)L \\
           & = \alpha \mu - \alpha^2 (\alpha \mu (t+1)+L)-\alpha(1-\alpha)L \\
           & = \alpha\mu - \alpha^2 (\alpha \mu (t+1))-\alpha L < 0.
\end{align*}
Hence,
\[
C \leq \frac{nG}{2a(n)}.
\]
So, setting $a(n) = n$ and using the bounds on $A, B$ and $C$,
\begin{equation*}
\sum_{t=1}^{T}(\E[f_{t}(y_{t})] - f_{t}(x^\ast)) \leq \frac{R^{2}\ln(T+1)}{2(1-\alpha)\alpha \mu} + \frac{nR^2\ln(T+1)}{\alpha \mu} +  \frac{D^2 }{2\alpha}\left(2\alpha\mu + L \right) + \frac{G}{2},
\end{equation*}
which is the desired bound.
\qed
\end{proof}

\section{Numerical Evaluation}
The algorithm proposed in this paper OARCD gives an improvement over ORBCD~\cite{IEEEhowto:kopka15,IEEEhowto:kopka16} as shown in red in the figures 1,2 and 3. The regret is much lower for both classification as well as regression. Table 1 shows the dataset description taken from UCI Repository. We have performed several experiments on RCV1 dataset also. We have observed that regularization is needed for the well posed-ness of the problem. Adaptive coordinated descent method is also used for setting the learning rates as a function of the sum of the gradients which reduced the regret significantly. We have observed that normalization and choosing L as sum of the squares of maximum features are related in the sense that if we do not normalize but set L to be the  sum of the squares of maximum features then also it gives the same result as we obtained after normalizing and setting L to be the number of features. We don't require per coordinate Lipschitz continuity if normalization is done. Online gradient methods will not work in cases where number of features are more. In the dataset 3 and 4 as shown in Table 1 and 2, OGD~\cite{IEEEhowto:kopka3} and SAGE~\cite{IEEEhowto:kopka4} took more than 1 minute while OARCD proposed in this paper took only 6 seconds to compute the regret. Also, OARCD shows improvent in accuracy and performs less number of mistakes than ORBCD~\cite{IEEEhowto:kopka15,IEEEhowto:kopka16} as shown in Table 2. When we add more number of examples, we see that the regret becomes constant since the difference between the best algorithm upto time t and the online algorithm becomes less. Figure 4 shows the comparison between the loss of APPROX~\cite{IEEEhowto:kopka11} and SARCD proposed in this paper.\\

\begin{figure}
\centering
\begin{minipage}{.6\textwidth}
  \centering
  \includegraphics[width=1\linewidth]{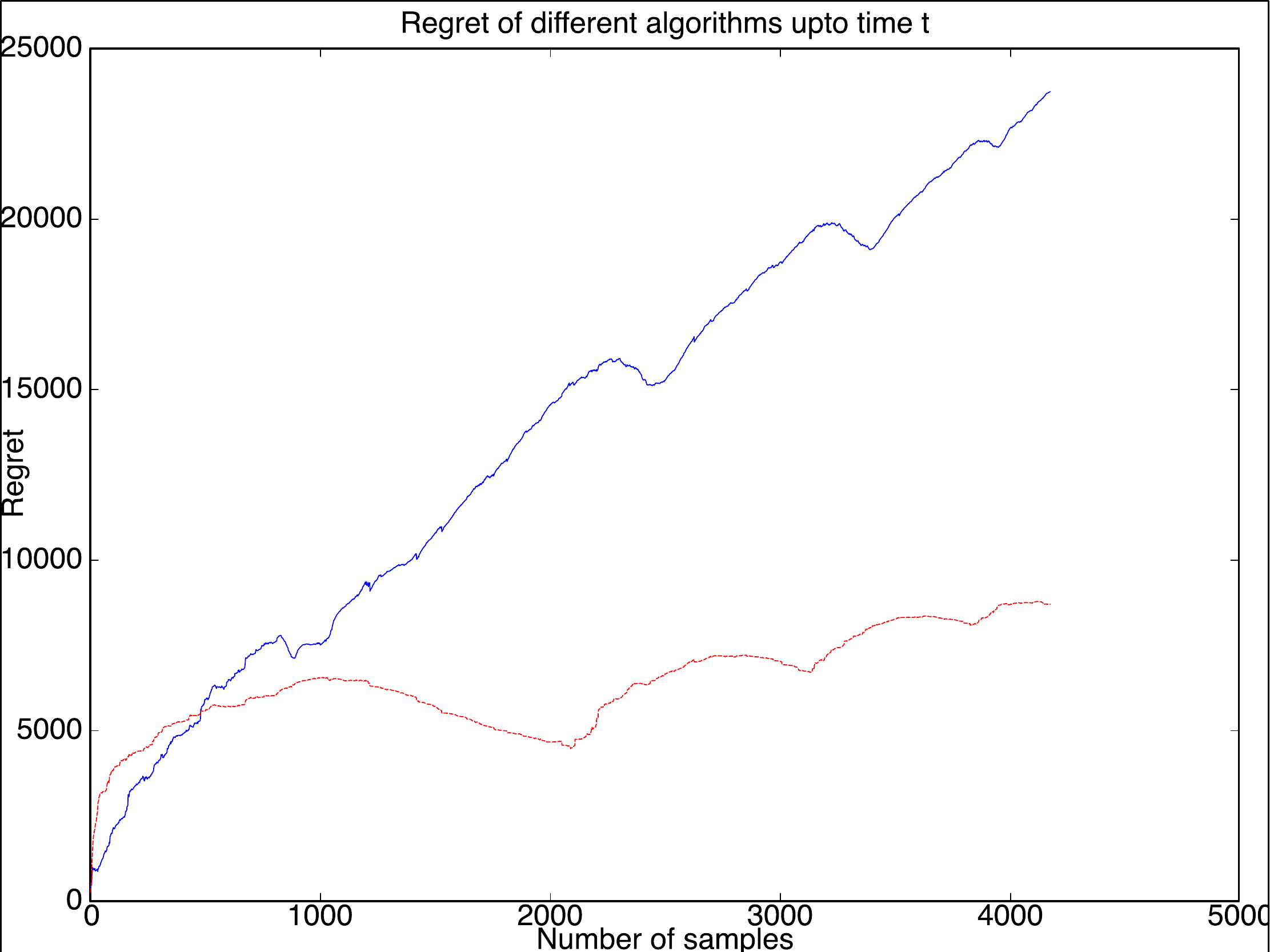}
  \caption{Regret comaprison on abalone dataset}
  \label{fig:test1}
\end{minipage}%
\begin{minipage}{.6\textwidth}
  \centering
  \includegraphics[width=1\linewidth]{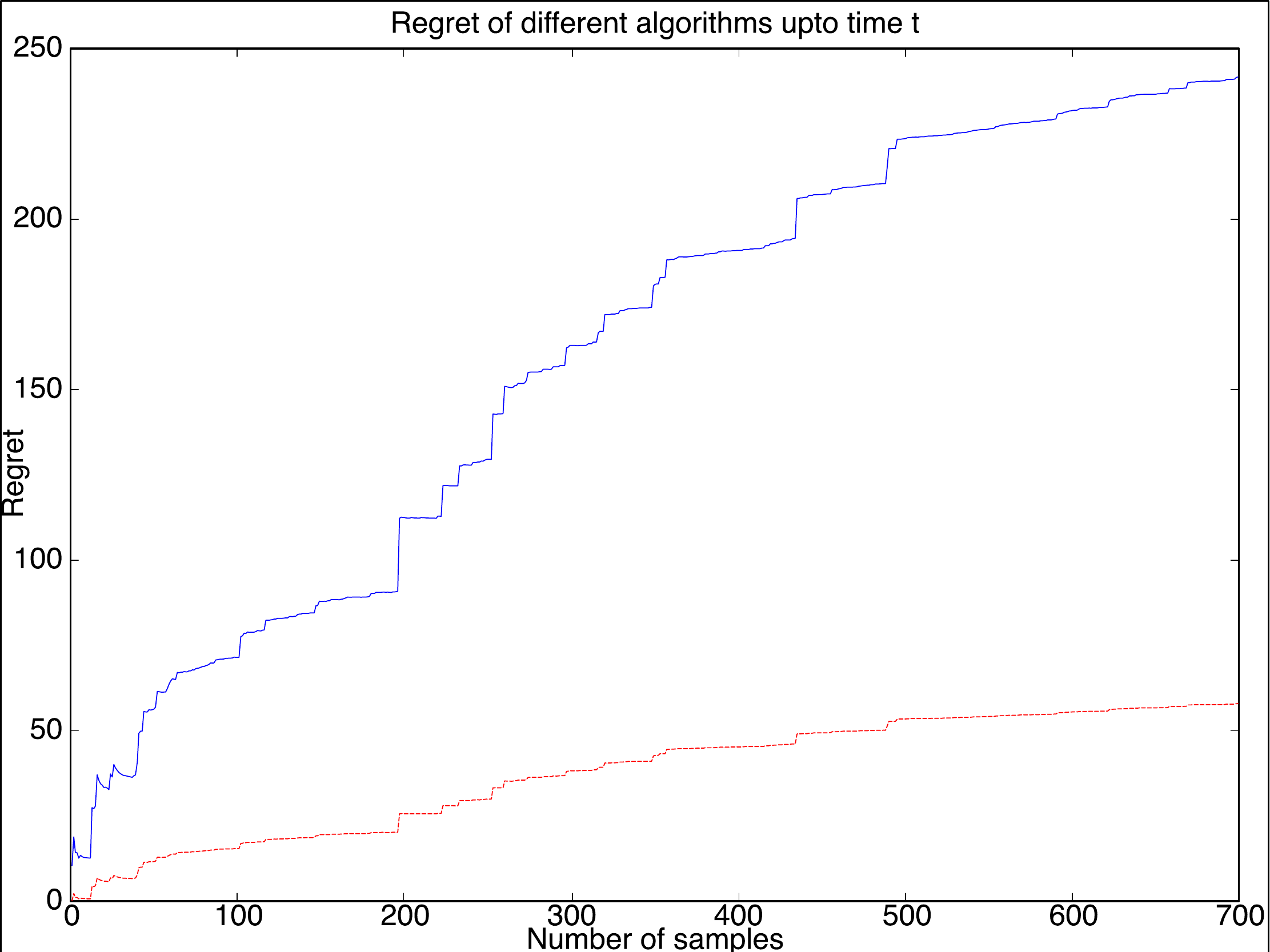}
  \caption{Regret comaprison on breastcancer dataset}
  \label{fig:test2}
\end{minipage}
\end{figure}

\begin{figure}
\centering
\begin{minipage}{.6\textwidth}
  \centering
  \includegraphics[width=1\linewidth]{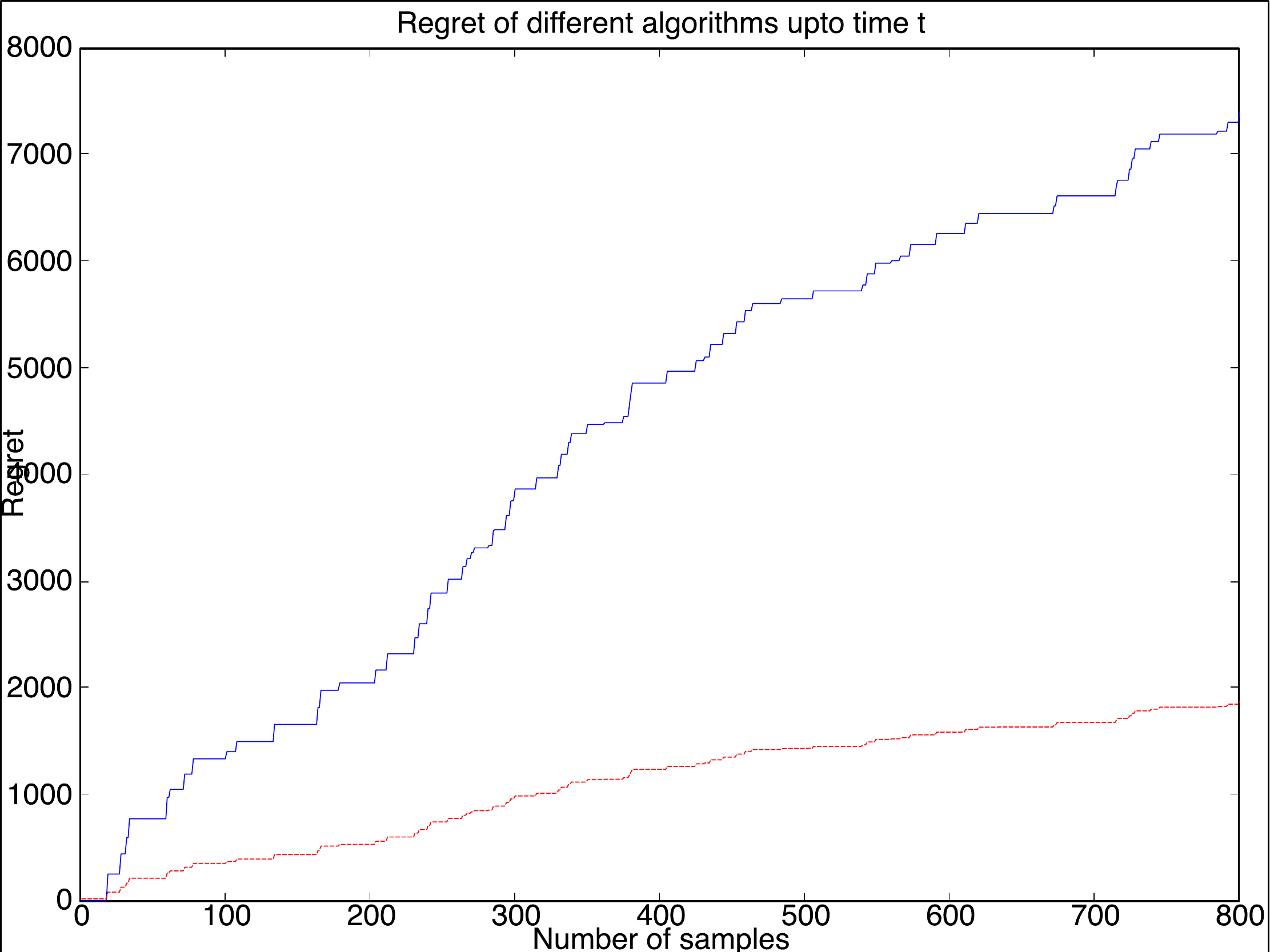}
  \caption{Regret comaprison on dorothea dataset}
  \label{fig:test1}
\end{minipage}%
\begin{minipage}{.6\textwidth}
  \centering
  \includegraphics[width=1\linewidth]{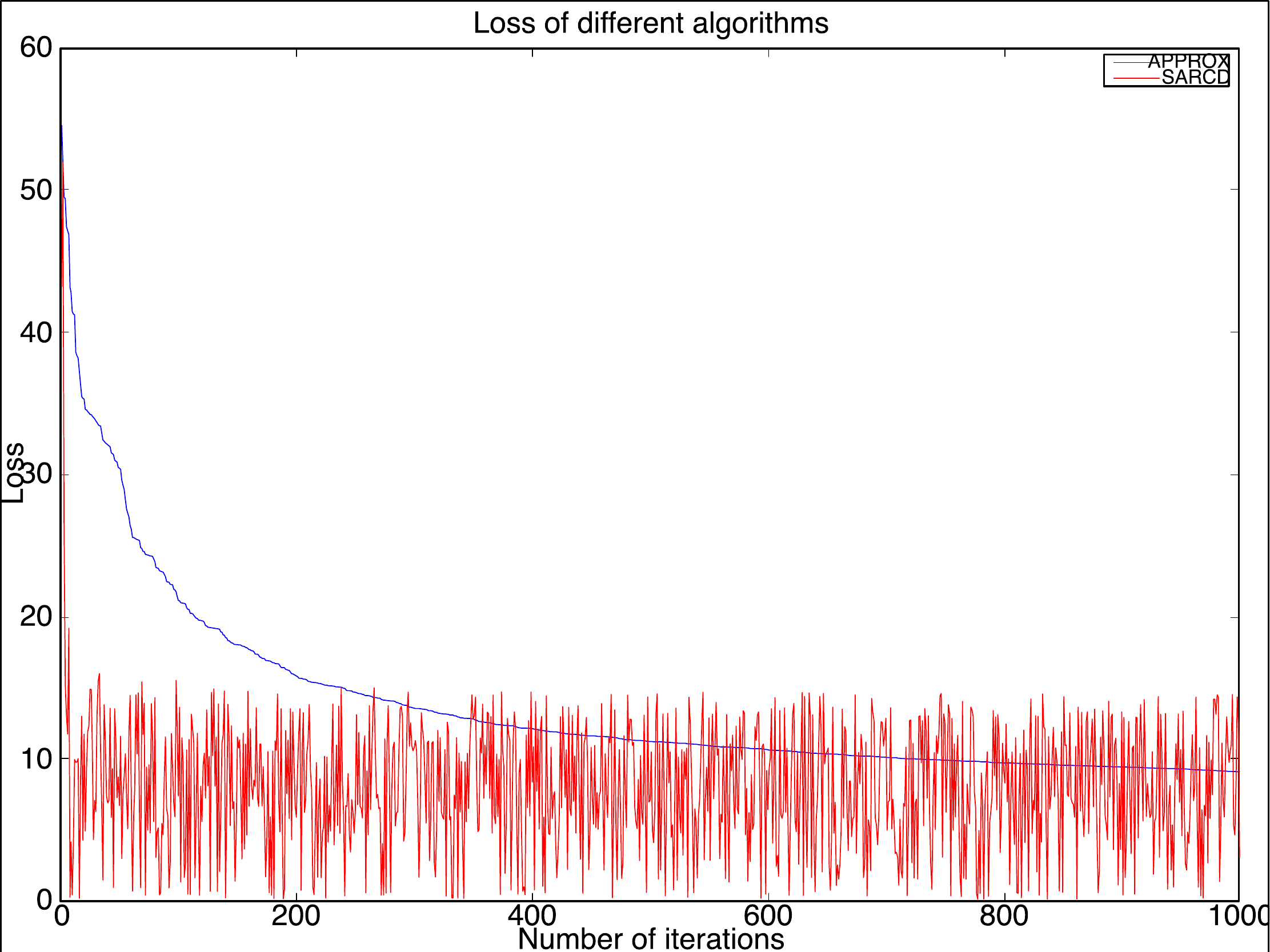}
  \caption{Loss of different algorithms on abalone dataset}
  \label{fig:test2}
\end{minipage}
\end{figure}

\begin{table}
\parbox{.35\linewidth}{
\centering
\begin{tabular}{|l|l|l|l|c|}
\hline
Datasets & \# features & \# examples   & Type           \\
\hline\hline
abalone  & 7           & 4177          & Regression     \\
breast cancer & 9      & 699       & Classification \\
dorothea & 100000      & 1950          & Classification \\
RCV1     & 47236       & 20242/677,399 & Classification\\
\hline
\end{tabular}
\caption{Dataset Description}
}
\hfill
\parbox{.35\linewidth}{
\centering
\begin{tabular}{|l|l|l|l|c|}
\hline
Algorithm        & Accuracy & \# Mistakes \\
\hline\hline
OARCD on abalone             & 91.32         & - \\
OARCD on breastcancer & 94.423462\%                            &  22       \\
OARCD on dorothea & 90.25\%                        & 78          \\
OARCD on RCV1 & 89\%                        & 115        \\
\hline
\end{tabular}
\caption{Accuracy and number of mistakes}
}
\end{table}

\section{Conclusion}
We have proposed two accelerated randomized coordinate descent algorithms 
for stochastic optimization and online learning, respectively.
Our algorithms exhibit performance as good as the best known
randomized coordinate descent algorithms and yield strictly better 
regret bounds in case of online learning.

Our ongoing and future work entails extending these algorithms to 
regularized loss functions. We would like to investigate
adaptation of feature selection probabilities to coordinate wise
smoothness parameters. We would also like to consider online learning problems where 
update of model parameters takes considerable time, and so, the updated
parameters are available only after a certain, potentially random, number
of data samples have passed. 

\subsubsection*{Acknowledgments:}
The second author acknowledges support of INSPIRE Faculty Research Grant~(DSTO-1363).

\end{document}